\newcommand{\R}{\mathbb R}
\newcommand{\N}{\mathbb N}
\newcommand{\Z}{\mathbb Z}
\newcommand{\E}{\mathbb E}
\newcommand{\sym}{\mathfrak{S}}
\newcommand{\comps}{\Lambda}
\newcommand{\compsk}{\lambda}
\newcommand{\kvec}{\Vec{k}}
\newcommand{\rvec}{\Vec{r}}
\newcommand{\one}{\mathbbm 1}
\newcommand{\ind}[1]{\mathbb{I}\left\{#1\right\}}
\newcommand{\multi}[2]{\big(\!\binom{#1}{#2}\!\big)}
\newcommand{\amulti}[2]{\bigg(\!\binom{#1}{#2}\!\bigg)}
\newcommand{\amultib}[2]{\Bigg(\!\binom{#1}{#2}\!\Bigg)}
\newcommand{\suml}{\sum_{\ell=1}^d}
\newcommand{\prodl}{\prod_{\ell=1}^d}
\newcommand{\cupl}{\cup_{\ell=1}^d}
\newcommand{\tvec}{\text{vec}}
\newcommand{\rank}{\text{rank}}
\newcommand{\rao}{\odot}
\newcommand{\sgn}{\mathrm{sgn}}
\newcommand{\ta}{\widetilde{A}}
\newcommand{\tb}{\widetilde{B}}
\newcommand{\ba}{\overline{A}}
\newcommand{\bb}{\overline{B}}
\newcommand{\bij}{\hookrightarrow\!\!\!\!\to}
\DeclareMathOperator{\diag}{diag}
\newtheorem{theorem}{Theorem}[section]
\newtheorem{lemma}[theorem]{Lemma}
\begin{document}

\title{Memory capacity of two layer neural networks with smooth activations\thanks{This work was partially funded by a UBC DSI Postdoctoral Fellowship and NSERC Discovery Grant No. 2021-03677.}}

\author{Liam Madden\thanks{Department of Electrical and Computer Engineering, University of British Columbia, Vancouver, BC, Canada.}
\and Christos Thrampoulidis\footnotemark[2]}

\date{\today}

\maketitle

\begin{abstract}
Determining the memory capacity of two layer neural networks with $m$ hidden neurons and input dimension $d$ (i.e., $md+2m$ total trainable parameters), which refers to the largest size of general data the network can memorize, is a fundamental machine learning question.
For activations that are real analytic at a point and, if restricting to a polynomial there, have sufficiently high degree, we establish a lower bound of $\lfloor md/2\rfloor$ and optimality up to a factor of approximately $2$. All practical activations, such as sigmoids, Heaviside, and the rectified linear unit (ReLU), are real analytic at a point. Furthermore, the degree condition is mild, requiring, for example, that $\binom{k+d-1}{d-1}\ge n$ if the activation is $x^k$. Analogous prior results were limited to Heaviside and ReLU activations---our result covers almost everything else.
In order to analyze general activations, we derive the precise generic rank of the network's Jacobian, which can be written in terms of Hadamard powers and the Khatri-Rao product.
Our analysis extends classical linear algebraic facts about the rank of Hadamard powers. Overall, our approach differs from prior works on memory capacity and holds promise for extending to deeper models and other architectures.
\end{abstract}

\section{Introduction}

Machine learning models, such as neural networks, have become increasingly adept at performing tasks, such as image classification and natural language processing, but also increasingly opaque in terms of their underlying mechanisms. And the intuition that experts employ to construct such models, while useful now, may be based on an erroneous understanding of those underlying mechanisms, much as a field specialist in an engineering firm may come up with explanations that are useful in the field but contradict the underlying physics. It is important that we understand the underlying mechanisms in machine learning so that we can mitigate inherent biases and avoid potential failures.

Neural networks, despite their simplicity, are still not fully understood. From the perspective of traditional optimization, they should be extremely hard to train due to the nonconvexity of their associated training objectives. However, even simple algorithms like stochastic gradient descent are able to find global minima to these training objectives \citep{du2019shallow}. At the heart of this phenomena is the fact that neural networks are expressive enough to interpolate data sets: interpolation is known to accelerate convergence \citep{vaswani2019fast}. But, from the perspective of traditional statistical learning theory then, they should not generalize well to unseen data. Nevertheless, they do generalize well to unseen data, interpolating without merely memorizing the seen data. In fact, it has been observed that, as the number of parameters increases, the generalization error decreases until it reaches a minimum, then increases until it reaches the interpolation threshold, then continues to decrease \citep{belkin2019reconciling}. Thus, the interpolation threshold is integral to understanding not only the expressivity of neural networks, but their convergence and generalization properties as well, and so we take it as the theme of the present paper.

The output of the two layer neural network model with data matrix $X=[x_1|\cdots|x_n]\in\R^{d\times n}$, first layer weight matrix $W=[w_1|\cdots|w_m]\in\R^{d\times m}$, first layer bias vector $b\in\R^m$, second layer weight vector $v\in\R^m$, and activation function $\psi:\R\to\R$ is
\begin{align*}
    h(W,b,v,X) =\sum_{i=1}^mv_i\psi\left(X^Tw_i+\one_nb_i\right)=\psi\left(X^TW+\one_n b^T\right)v\in\R^n,
\end{align*}
where $\psi$ is applied coordinate-wise and $\one_n\in\R^n$ is the vector of ones. The columns of $W$ and elements of $b$ correspond to $m$ hidden neurons, $\psi(W^TX+\one_n b^T)$ embeds the data into $\R^m$, $m$ is the embedding dimension, and $v$ corresponds to a linear model applied to the embedded data. For fixed $m$, the memory capacity of $h$ is the largest $n$ such that $h(\cdot,\cdot,\cdot,X)$ is surjective for generic $X$ \citep{cover1965geometrical}. As is common in the literature, e.g. in \cite{allman2009identifiability}, we use ``generic'' to mean that $X$ comes from a set whose complement is contained in the zero set of a non-identically zero real analytic function; in particular, this implies that the complement is measure zero and closed. Given $n$, we are interested in how large $m$ has to be in order for the memory capacity of $h$ to be at least $n$.

We consider general activations that are real analytic at a point, which only precludes pathological examples such as the Cantor function. The only requirement beyond this is that if the activation restricts to a polynomial, then the polynomial must have sufficiently high degree. Common examples of real analytic activations are sigmoids, such as the logistic function, tanh, and arctan, as well as smoothed rectified linear units, such as GELU. Examples of non-analytic activations that are, nevertheless, analytic at a point, are splines (piecewise polynomials), including ReLU, which is piecewise linear. On the other hand, to show that the memory capacity result is tight we have to assume that $\psi$ is continuously differentiable. While this excludes some splines, such as ReLU, the point of it is just to preclude the possibility that $h(\cdot,\cdot,\cdot,X)$ is a space-filling map, like Peano's curve.

To construct an (implicit) interpolating solution for generic $X$, we will (1) show that the Jacobian of $h(\cdot,b,v,X)$ has generically full rank near the point where $\psi$ is analytic, (2) scale the first layer weight matrix and use the bias vector to stay within the interval of convergence, (3) apply the Constant Rank Theorem to get an implicit interpolating solution for a scaled and shifted version of the neural network, and (4) show how to implement this solution by doubling the width of the original neural network. The transpose of the Jacobian of $h$ with respect to $\tvec(W)$ is $\diag(v)\psi'\left(W^TX+b\one_n^T\right)\rao X\in\R^{md\times n}$ where $\rao$ is the Khatri-Rao product (column-wise Kronecker product) \citep[Sec. VI.A]{oymak2020toward}.



\subsection{Results}

In order to facilitate adapting the steps (1)-(4) to more general models, we state and prove Theorem~\ref{thm:everything}, which, when combined with a full rank Jacobian, provides an (implicit) interpolating solution for general machine learning models, such as two layer neural networks and deep neural networks. Then, for two layer neural networks, we prove the following tight, up to a factor of $\approx 2$, memory capacity result:
\begin{itemize}
    \item Theorem~\ref{thm:ae}: If $md\ge 2n$ and $m$ is even, then $h(\cdot,\cdot,\cdot,X)$ is surjective for generic $X$.
    \item Theorem~\ref{thm:lowerbound}: If $m(d+2)< n$, then, for all $X$, $h(\cdot,\cdot,\cdot,X)$ is not surjective.
\end{itemize}

In order to prove Theorem~\ref{thm:ae}, we first have to derive the generic rank of the Jacobian. Let $A\in \R^{m\times d}$ and $B\in\R^{n\times d}$ and let $(c_k)$ be a sequence in $\R$ and $\phi:\R\to\R$ be non-polynomial real analytic. Then we prove the following results, where the exponent $(k)$ denotes the $k$th Hadamard power:
\begin{itemize}
\item Theorem~\ref{thm:damm}: The rank and Kruskal rank of $(AB^T)^{(k)}$ are $\min\{m,n,\binom{k+d-1}{k}\}$ for generic $(A,B)$.
\item Theorem~\ref{thm:poly}: The rank and Kruskal rank of
\begin{align*}
    \sum_{k=0}^K c_k \left(AB^T\right)^{(k)}\text{ are }\min\Bigg\{m,n,\sum_{k=0}^K \ind{c_k\neq 0}\binom{k+d-1}{k}\Bigg\}
\end{align*}
for generic $(A,B)$.
\item Theorem~\ref{thm:khatri}: The rank and Kruskal rank of $B^T\rao (AB^T)^{(k)}$ are $\min\{md,n,\binom{k+d}{k+1}\}$ for generic $(A,B)$.
\item Theorem~\ref{thm:polykhatri}: The rank and Kruskal rank of
\begin{align*}
    \min\Bigg\{md,n,\sum_{k=0}^K \ind{c_k\neq 0}\binom{k+d}{k+1}\Bigg\}\text{ are }B^T\rao \sum_{k=0}^K c_k \left(AB^T\right)^{(k)}
\end{align*}
for generic $(A,B)$.
\item Theorem~\ref{thm:real}: The rank and Kruskal rank of $\phi(AB^T)$ are $\min\{m,n\}$ for generic $(A,B)$.
\item Theorem~\ref{thm:realkhatri}: The rank and Kruskal rank of $B^T\rao \phi(AB^T)$ are $\min\{md,n\}$ for generic $(A,B)$.
\end{itemize}

It may be helpful to see the relevant objects in the following diagram, which has arrows indicating particular proof extensions.

\begin{center}
\begin{tikzpicture}[>=triangle 45,font=\sffamily]
    \node (damm) at (0,0) {Thm~\ref{thm:damm}: $\left(AB^T\right)^{(k)}$};
    \node (khatri) [right=4cm of damm] {Thm~\ref{thm:khatri}: $B^T\rao\left(AB^T\right)^{(k)}$};
    \node (poly) [below=2cm of damm] {Thm~\ref{thm:poly}: $\sum_{k=0}^K c_k\left(AB^T\right)^{(k)}$};
    \node (polykhatri) [below=2cm of khatri] {Thm~\ref{thm:polykhatri}: $B^T\rao \sum_{k=0}^K c_k\left(AB^T\right)^{(k)}$};
    \node (real) [below=2cm of poly] {Thm~\ref{thm:real}: $\phi(AB^T)$};
    \node (realkhatri) [below=2cm of polykhatri] {Thm~\ref{thm:realkhatri}: $B^T\rao \phi(AB^T)$};
    \draw [->] (damm) -- (khatri) node [midway,above] {1} node [midway,below] {Sec~\ref{sec:khatri}};
    \draw [->] (damm) -- (poly) node [midway,left] {2};
    \draw [->] (poly) -- (polykhatri) node [midway,above] {1};
    \draw [->] (khatri) -- (polykhatri) node [midway,left] {2} node [midway,right] {Sec~\ref{sec:polykhatri}};
    \draw [->] (poly) -- (real) node [midway,left] {3};
    \draw [->] (polykhatri) -- (realkhatri) node [midway,left] {3} node [midway,right] {Sec~\ref{sec:analytic}};
\end{tikzpicture}
\end{center}

Theorems \ref{thm:khatri}, \ref{thm:polykhatri}, and \ref{thm:realkhatri}
are our main contributions (in addition to Theorems~\ref{thm:everything}-\ref{thm:lowerbound}). Theorem~\ref{thm:damm} was proved in \cite{damm2023hadamard} but we give a novel proof that allows us to extend to the other results. The second proof extension, shown in the diagram by the arrows labeled ``2,'' follows almost immediately thanks to the structure of our proof. The first proof extension, shown in the diagram by the arrows labeled ``1,'' is the most involved and requires a novel decomposition of $B^T\rao (AB^T)^{(k)}$. The third proof extension, shown in the diagram by the arrows labeled ``3,'' is not difficult but subtle and involves reducing to the polynomial setting using Taylor's theorem.

\subsection{Related work}
\label{sec:related}
For simplicity, let the bias vector be zero. Then the transpose of the Jacobian of $h$ with respect to $\tvec(W)$ is $\Phi=\diag(v)\psi'\left(W^TX\right)\rao X$. If $\rank(\Phi)=n$ at $(W_0,0,v_0,X)$, then, given a label vector $y\in\R^n$, we can solve for $U_0\in\R^{d\times m}$ in the linear equation
\begin{align}
\label{eq:bubeck}
\begin{split}
    y &= \left(\diag(v_0)\psi'\left(W_0^TX\right)\rao X\right)^T\tvec(U_0)\\
    \text{and use }&=\lim_{\epsilon\to 0}~\psi\left(X^T[W_0+\epsilon U_0~ W_0]\right)[v_0/\epsilon;-v_0/\epsilon]
\end{split}
\end{align}
to show that the image of $h(\cdot,0,\cdot,X)$ with embedding dimension $2m$ is dense \citep[Prop. 4]{bubeck2020network}. \cite{bubeck2020network} used this trick to show that if $\psi$ is ReLU (i.e. $\psi(x)=\max\{0,x\})$, then $m\ge 4\lceil n/d\rceil$ is sufficient to guarantee that, for all $X$ in general linear position, $h(\cdot,0,\cdot,X)$ is surjective (since in this case we don't have to take $\epsilon$ all the way to zero). \cite{baum1988multilayer} had already proved a similar result if $\psi$ is the Heaviside function (i.e. $\psi(x)=\ind{x\ge 0}$) and the image of $h(\cdot,0,\cdot,X)$ only needs to contain $\{0,1\}^n$, and \cite{yun2019small} had extended this to ReLU.

Recently, \cite{zhang2021when} used the Inverse Function Theorem to generalize Eq.~\eqref{eq:bubeck}, demonstrating that if $\psi$ is smooth and if $\rank(\Phi)=n$ at some $(W_0,0,v_0,X)$, then $h(\cdot,0,\cdot,X)$ with embedding dimension $2m$ is surjective. Their approach inspired our Theorem~\ref{thm:everything}, which further extends their result to encompass a broader range of machine learning models.


Regarding the conditions that ensure $\rank(\Phi)=n$ for smooth activations,  it is shown in Corollary 3.1 of \cite{montanari2022interpolation} that, given $n\in\N$, there exists $C(n)>0$ such that $md/\log^{C(n)}(md)\ge n$ is sufficient to guarantee that, with high probability over random $X$, $h(\cdot,0,\cdot,X)$ is surjective. Ignoring the dependence of $C(n)$ on $n$, this bound has extra log factors and only holds for $X$ with high probability. Thus, our results represent an improvement in both these aspects. To better understand the basis for these improvements, it's helpful to delve into the origins of their result, which is actually a corollary of an intermediate step in the proof of a convergence result, as elaborated below. 
In order to prove that $\rank(\Phi)=n$, they  show that the smallest eigenvalue of $\Phi^T\Phi$ is positive by bounding the minimum eigenvalue of $\E_{W,v}[\Phi^T\Phi]$ away from zero and applying the matrix Chernoff inequality. For example, letting $\lambda$ denote $\lambda_{\min}(\E_{W,v}[\Phi^T\Phi/m])$, then applying Remark 5.3 of \cite{tropp2012user} gives the following: if
\begin{align*}
    m \ge \frac{2\|X\|_2^2\log(n/\delta)}{\lambda}
\end{align*}
then, with probability greater than $1-\delta$,
\begin{align*}
    \lambda_{\min}(\Phi^T\Phi/m) \ge \lambda - \frac{\|X\|_2}{\sqrt{m}}\sqrt{2\lambda\log(n/\delta)}.
\end{align*}
This bound on $\lambda_{\min}(\Phi^T\Phi/m)$ was first proved in \cite{oymak2020toward}, a paper which, along with \cite{song2019quadratic}, tightened the analysis of the landmark paper \cite{du2019shallow}. Subsequently, \cite{montanari2022interpolation} established a  lower bound on $\lambda$, which aligns with empirical investigations outlined in \cite{xie2017diverse}. But, while bounding the distance of $\Phi^T\Phi$ from $\E_{W,v}[\Phi^T\Phi]$ ends up being useful for the convergence analysis, it is unnatural for memory capacity analysis: there is no reason why $\Phi^T\Phi$ needs to be close to $\E_{W,v}[\Phi^T\Phi]$ in order for $\rank(\Phi)=n$, since, as we show, $\rank(\Phi)=n$ generically.

For the rank results without the Khatri-Rao product, while \cite{damm2023hadamard} recently derived the generic rank of $(AB^T)^{(k)}$, it seems the generic rank of $\sum_{k=0}^Kc_k (AB^T)^{(k)}$ has actually been known for some time, though it is difficult to find precise statements of equality rather than upper bounds. For example, Lemma 2.3 of \cite{alon2009perturbed} proved in four lines that the rank of $(AB^T)^{(k)}$ and $\sum_{k=0}^Kc_k (AB^T)^{(k)}$ are bounded by $\binom{k+d-1}{k}$ and $\binom{K+d}{K}$ respectively. Lemma 4.4 of \cite{barvinok2012approximations} proved the latter bound, calling it a ``standard linear algebra fact.'' Moreover, if we slightly modify their proof by including the coefficients $\ind{\alpha_m\neq 0}$ in the final expression, then a little more work proves our Theorem~\ref{thm:poly}. We include this alternative proof in Appendix~\ref{sec:directsum} in order to explain why it cannot be extended to Khatri-Rao products.

Finally, for the rank results involving the Khatri-Rao product, the only previous result, other than corollaries of convergence results, was Lemma E.1 of \cite{zhang2021when}, which applies to non-polynomial real analytic functions. Upon a careful examination of their proof, we encountered inconsistencies that appeared unresolvable. Only after the initial release of our preprint, a correspondence with the authors of \cite{zhang2021when} revealed that these inconsistencies stemmed from typos and convoluted presentation of their methodology. During this discussion, we realized that the idea behind their proof is correct and can, in fact, be presented in a rather transparent manner, which we do in Appendix~\ref{sec:zhang} for completeness. 
Overall, our proof technique is independent and entirely distinct from theirs. Furthermore, our results apply to more general functions, such as polynomials and functions that are only real analytic at a point. The proof in \cite{zhang2021when} can be extended to polynomials as well, as we show in Appendix~\ref{sec:zhang}, but it provides a loose lower bound on the generic rank rather than giving the precise generic rank. In particular, it can only be used to justify using polynomials with an extremely large number of nonzero coefficients.


\subsection{Organization}

Our paper is organized as follows. In Section~\ref{sec:prelims}, we go through the necessary preliminaries. In Section~\ref{sec:warmup}, we present our main proof idea and prove Theorems~\ref{thm:damm} and~\ref{thm:poly}. In Section~\ref{sec:main}, we prove Theorems~\ref{thm:khatri}-\ref{thm:realkhatri}. In Section~\ref{sec:neuralprelims}, we prove Theorems~\ref{thm:everything} and~\ref{thm:ae}. In Section~\ref{sec:conclusion}, we conclude. In the appendix we go into further details about some of the related work.

\section{Preliminaries}
\label{sec:prelims}

In this section we discuss the necessary preliminaries. In Section~\ref{sec:rankprelims} we define the different notions of rank and discuss the linear algebra tools that we will employ. In Section~\ref{sec:analyticprelims}, we discuss some notions from the theory of analytic functions of several variables. In Section~\ref{sec:compositions}, we discuss some basic facts about weak compositions, which we will index some of our matrices by.

Throughout the present paper, the following notation will be used: $\Z_+$ denotes $\N\cup\{0\}$, $[n]$ denotes $\{1,2,\ldots,n\}$, $\binom{A}{n}$ denotes $\{B\subset A\mid |B|=n\}$, $\circ$ denotes the Hadamard product (and function composition), $\rao$ denotes the Khatri-Rao product defined as $[a_1|\cdots|a_d]\rao[b_1|\cdots|b_d]=[a_1\otimes b_1|\cdots|a_d\otimes b_d]$, the exponent $(k)$ denotes the $k$th Hadamard power, and ``Hadamard function'' refers to univariate functions applied coordinate-wise.

Let $A\in\R^{m\times n}$. We use $a_i$ to denote the $i$th column of $A$ and $a_{i,j}$ to denote the $(i,j)$th entry of $A$. Given $I\subset [m]$, and $J\subset [n]$, we denote the submatrix formed from the rows $I$ and columns $J$ of $A$ as $A_{I,J}$. We use $\sym_n$ to denote the symmetric group of degree $n$. Note that we can write $\sym_n=\{\sigma:[n]\bij [n]\}$. More generally, consider a matrix $A$ with rows indexed by a set $I$ with $|I|=m$ and columns indexed by a set $J$ with $|J|=n$. We order $I$ and $J$ lexicographically if they consist of pairs, i.e. the first entry in the pair changes slower than the second entry. If $D\in\R^{n\times n}$ is a diagonal matrix, then we use $D_{\ell}$ to denote $d_{\ell,\ell}$ and so avoid confusion with the $d$ used in $\R^d$.

Finally, we say a property holds for generic $x$ if it holds for all $x\in\R^n$ except on the zero set of a non-identically zero real analytic function.

\subsection{Rank of a matrix}
\label{sec:rankprelims}

Let $A\in\R^{m\times n}$. The rank of $A$ is the dimension of its column space or, equivalently, its row space. Let $I\subset [m]$ and $J\subset [n]$ with $|I|=|J|\le \min\{m,n\}$. Then the $(I,J)$ minor of $A$, denoted $\det_{I,J}(A)$, is the determinant of the submatrix $A_{I,J}$. The order of the minor is $|I|$. Using minors, the rank of $A$ can be equivalently defined as the largest $r$ such that there is a nonzero minor of order $r$. This allows us to write the following rank condition:
\begin{align*}
    \rank(A) \ge r\iff \sum_{\substack{(I,J)\subset [m]\times [n]\\|I|=|J|=r}}\det_{I,J}(A)^2\neq 0.
\end{align*}
Note that
\begin{align*}
    \sum_{\substack{(I,J)\subset [m]\times [n]\\|I|=|J|=r}}\det_{I,J}(A)^2
\end{align*}
is a multivariate polynomial in the entries of $A$. If we replace $A$ with a matrix-valued real analytic function of several variables $A(x)$, then the composition of the polynomial with $A$ is a real analytic function. We call this composite function the ``rank condition function of order $r$ associated with $A$''. Thus, we have that $\rank(A(x))\ge r$ if and only if $x$ is not in the zero set of the rank condition function of $r$ associated with $A$. If the zero set is not the whole domain (i.e. if the rank condition function is not identically zero), then it is very limited as we will explain in Section~\ref{sec:analyticprelims} (in particular, it is measure zero and closed). So, our task lies in showing that the rank condition function is not identically zero. To do so we use Leibniz's determinant formula \citep[Def. 10.33]{axler2015linear},
\begin{align*}
    \det(A) = \sum_{\sigma\in\sym_n}\sgn(\sigma)\prod_{i=1}^n a_{i,\sigma(i)}=\sum_{\sigma\in\sym_n}\sgn(\sigma)\prod_{i=1}^n a_{\sigma(i),i}\text{ for }A\in\R^{n\times n},
\end{align*}
and the Cauchy-Binet formula \citep[Sec. I.2.4]{gantmacher1960matrices},
\begin{align*}
    \det(AB^T) = \sum_{S\in\binom{[n]}{m}} \det_{[m],S}(A)\det_{[m],S}(B)\text{ for }A,B\in\R^{m\times n}~(n\ge m).
\end{align*}
Specifically, we use the following corollary of the Cauchy-Binet formula.

\begin{lemma}
\label{lma:binetdiag}
Let $A\in\R^{m\times N}$ and $B\in\R^{n\times N}$, and let $D\in\R^{N\times N}$ be diagonal.
Let $I\subset [m]$ and $J\subset [n]$ such that $|I|=|J|=s\le \min\{m,n\}$.
Then
\begin{align*}
    \det_{I,J}\left(ADB^T\right) = \sum_{S\in\binom{[N]}{s}}\left(\prod_{\ell\in S} D_{\ell}\right)\det_{I,S}(A)\det_{J,S}(B).
\end{align*}
\end{lemma}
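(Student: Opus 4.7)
The plan is to reduce the claim to the ordinary Cauchy-Binet formula (stated just above the lemma) by restricting to the relevant row indices and then using multilinearity of the determinant to pull the diagonal entries of $D$ outside.

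First, I would observe that the submatrix of interest is itself the product of three matrices of the same shape as $A$, $D$, $B^T$, but with only the rows indexed by $I$ and $J$ retained. Setting $\ta = A_{I,[N]}\in\R^{s\times N}$ and $\tb = B_{J,[N]}\in\R^{s\times N}$, one has
\begin{align*}
    (ADB^T)_{I,J} = \ta D \tb^T,
\end{align*}
so $\det_{I,J}(ADB^T) = \det(\ta D \tb^T)$. Both $\ta D$ and $\tb$ are $s\times N$ matrices with $N\ge s$, so the Cauchy-Binet formula applies and gives
\begin{align*}
    \det(\ta D \tb^T) = \sum_{S\in\binom{[N]}{s}} \det_{[s],S}(\ta D)\,\det_{[s],S}(\tb).
\end{align*}

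Next I would handle the factor $\det_{[s],S}(\ta D)$. For any $S\in\binom{[N]}{s}$, the submatrix $(\ta D)_{[s],S}$ equals $\ta_{[s],S}$ with each column $\ell\in S$ scaled by $D_\ell$, so multilinearity of the determinant in the columns yields $\det_{[s],S}(\ta D) = \big(\prod_{\ell\in S} D_\ell\big)\det_{[s],S}(\ta)$. Since $\ta_{[s],S} = A_{I,S}$ and $\tb_{[s],S} = B_{J,S}$ by construction, substituting back delivers exactly the claimed identity.

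There is no real obstacle here: once the minor is recognized as a determinant of a three-factor product and $D$ is absorbed into one side by multilinearity, the statement is simply the Cauchy-Binet formula. The only thing to track carefully is the relabeling from the general row indices $I,J$ to $[s]$ inside $\ta,\tb$, which is immediate from the definitions.
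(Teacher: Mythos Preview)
Your argument is correct and essentially matches the paper's: both restrict to the rows $I,J$ and apply Cauchy--Binet, the only difference being that you extract the $D_\ell$ factors via column multilinearity whereas the paper applies Cauchy--Binet a second time to $DB^T$ and uses that $\det_{S,S'}(D)=0$ for $S\neq S'$. One minor omission is your unjustified assumption $N\ge s$; when $s>N$ both sides vanish trivially (the left since $\rank(ADB^T)\le N$, the right since the sum is empty), as the paper remarks immediately after its proof.
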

\begin{proof}
Applying the Cauchy-Binet formula, we get
\begin{align*}
    \det_{I,J}\left(ADB^T\right) &= \sum_{S\in\binom{[N]}{s}}\det_{I,S}(A)\det_{S,J}\left(DB^T\right)\\
    &= \sum_{S,S'\in\binom{[N]}{s}}\det_{I,S}(A)\det_{S,S'}(D)\det_{J,S'}(B)
\end{align*}
but, since $D$ is diagonal, $\det(D_{S,S'})=0$ unless $S=S'$, in which case it equals $\prod_{\ell\in S}D_{\ell}$ and so the result follows.
\end{proof}

Note that if $s>N$, then $\binom{[N]}{s}$ is empty and so the minor is identically zero.

These are powerful tools because they can be used to write the minor of a matrix-valued polynomial as a sum of polynomials. Thus, the question of rank becomes a question of the linear independence of multivariate polynomials.

In addition to the usual notion of rank, there are stronger notions of rank. For example, the Kruskal rank of $A$ is the largest $r$ such that every subset of $r$ columns of $A$ is linearly independent \citep{kruskal1977three}. Given $J\subset [n]$ such that $|J|=r\le m$, the columns of $A_{[m],J}$ are linearly independent if and only if the rank of $A_{[m],J}$ is $r$. Thus, we have the following Kruskal rank condition:
\begin{align*}
    \text{the Kruskal rank of }A\ge r\iff \prod_{\substack{J\subset [n]\\|J|=r}}\sum_{\substack{I\subset [m]\\|I|=r}}\det_{I,J}(A)^2\neq 0.
\end{align*}

\subsection{Real analytic functions of several variables}
\label{sec:analyticprelims}

We will use properties of analytic functions to (1) extend our results from polynomial $A$ to real analytic $A$ and (2) to interpret the rank being deficient on the zero set of a non-identically zero analytic function.

First, an analytic function is uniquely determined by the coefficients of its power series expansion at each point. This is a corollary of Taylor's theorem, which gives these coefficients in terms of the partial derivatives (of all orders) of the function at a point. Thus, to show that the rank condition associated with $A$ is not identically zero, we only have to show that a subset of its coefficients are not all zero (since \textit{all} of the coefficients of the zero function are zero).

Second, the zero set of a non-identically zero analytic function is measure zero by Corollary 10 of \cite{gunning2022analytic}, a corollary of the identity theorem \citep[Thm. 6]{gunning2022analytic} and Jensen's inequality \citep[Thm. 9]{gunning2022analytic}. The identity theorem states that if $f$ and $g$ are analytic on an open connected domain and $f=g$ on a nonempty open subset, then $f=g$ on the whole domain. The identity theorem and its corollary hold for real analytic functions as well (this is clear from the proofs), so we have shown the measure zero part of the following proposition. To show the closed part, just use that the preimage of $\R\backslash\{0\}$ is open since $f:\R^n\to\R$ is continuous.

\begin{lemma}
\label{prop:generic}
The zero set of a non-identically zero real analytic function is measure zero and closed.
\end{lemma}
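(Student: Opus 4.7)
The plan is to decompose the claim into two essentially independent parts: closedness of the zero set, and the measure-zero property. The closed part is a one-line consequence of continuity, while the measure-zero part is the substantive content and is where the real analytic hypothesis is actually needed.

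For closedness, I would observe that any real analytic function $f$ on an open domain $U \subset \R^n$ is continuous, so $f^{-1}(\{0\})$ is the preimage of a closed set under a continuous map and hence closed in $U$. (If, as in the paper's usage, the domain is all of $\R^n$, this gives closedness in $\R^n$ directly.) No use of analyticity beyond continuity is needed here.

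For the measure-zero part, the key tool is the identity theorem for real analytic functions on a connected open domain: if such a function vanishes on a nonempty open subset, it vanishes identically. The cleanest route is to proceed by induction on the dimension $n$. For $n=1$, the identity theorem forces the zeros of a non-identically zero real analytic function to be isolated in the domain, so the zero set is at most countable and hence Lebesgue measure zero. For $n \ge 2$, one fixes a variable and applies Fubini: for a non-identically zero $f$, one shows that for almost every choice of the remaining variables the univariate slice is a non-identically zero real analytic function, so by the one-dimensional case each such slice has measure zero; Fubini then gives measure zero of the whole zero set. The only subtlety is verifying that the slices are indeed generically non-identically zero, which follows by picking coordinates so that some partial derivative pattern at a chosen nonzero point of $f$ propagates through the slicing, an argument that is itself underpinned by the identity theorem. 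Alternatively, one may simply cite the packaged statement (Corollary 10 of Gunning 2022, resting on the identity theorem and Jensen's inequality), which is the route the excerpt already takes.

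The main obstacle is the measure-zero part: one must invoke a nontrivial result from several-variable complex/real analysis, whereas the closedness is essentially trivial. Because the paper is content to cite the Gunning reference for the former and to note continuity for the latter, I expect the proof to be short, and my own write-up would be a two-sentence proof mirroring the discussion immediately preceding the lemma statement.
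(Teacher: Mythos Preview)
Your proposal is correct and matches the paper's approach: closedness follows from continuity (preimage of a closed set under a continuous map), and the measure-zero part is handled by citing Corollary~10 of Gunning, which rests on the identity theorem and Jensen's inequality. Your additional Fubini/induction-on-dimension sketch is a valid self-contained alternative for the measure-zero claim, but the paper is content with the citation and does not spell out such an argument.
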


Actually, more can be said. The zero set of a non-identically zero real analytic function is locally a finite union of lower dimensional manifolds. In particular, this implies that it is measure zero and closed, but it also allows us to visualize its complement as a countable union of (possibly unbounded) cells \citep{guaraldo1986topics}.

\subsection{Weak compositions}
\label{sec:compositions}

We will index our decompositions by weak compositions. There is a well-known bijection between multisets of cardinality $k$ taken from $[d]$---$i_1,\ldots,i_k\in[d]$ such that $i_1\leq\ldots\le i_k$---and $d$-tuples of non-negative integers whose sum is $k$, i.e. weak compositions of $k$ into $d$ parts---$k_1,\ldots,k_d\in\Z_+$ such that $k_1+\cdots+k_d=k$. It can be seen by viewing $k_i$ as the multiplicity of $i$, i.e. $k_i=|\{\ell\in[k]\mid i=i_{\ell}\}|$. The total number of either is $\multi{d}{k}\coloneqq\binom{k+d-1}{k}=\binom{k+d-1}{d-1}$ by stars and bars \citep[Thm. 2.5.1]{brualdi2010combinatorics}.

Let
\begin{align*}
    \compsk(k) &\coloneqq \Big\{\kvec\in \Z_+^d\mid k_1+\cdots+k_d=k\Big\}\\
    \text{and }\comps(K) &\coloneqq \Big\{\kvec\in \Z_+^d\mid k_1+\cdots+k_d\le K\Big\}.
\end{align*}
Note that
\begin{align*}
    |\comps(K)|=\sum_{k=0}^K\amulti{d}{k}=\sum_{k=0}^K\binom{k+d-1}{d-1}=\sum_{i=d-1}^{K+d-1}\binom{i}{d-1}=\binom{K+d}{d}
\end{align*}
by Zhū's Theorem \citep[Thm. 1.5.2]{merris2003combinatorics}, which was published by Zhū Shìjié in 1303 \citep{chu1303}.

\section{Results involving Hadamard powers}
\label{sec:warmup}

Here we present the main idea that allows us to derive the generic rank of $B^T\rao \phi(AB^T)$. In Section~\ref{sec:mainidea}, we show how to derive the generic rank of a matrix-valued real analytic function if we can first decompose it in an appropriate way. As a warm-up, we derive the generic rank of $AB^T$ in Section~\ref{sec:product}. Then, we derive the generic ranks of $(AB^T)^{(k)}$ and $\sum_{k=0}^Kc_k(AB^T)^{(k)}$ in Sections~\ref{sec:damm} and~\ref{sec:poly} respectively.

\subsection{Proof sketch}
\label{sec:mainidea}

Here we sketch how to derive the generic rank of a matrix-valued real analytic function. Let $f:\R^d\to \R^{m\times n}$ be real analytic. For all $I\subset [m]$ and $J\subset [n]$ such that $|I|=|J|\le \min\{m,n\}$, define
\begin{align}
\label{eq:gij}
\begin{split}
    g_{I,J}:\R^d&\to \R\\
    x&\mapsto \det_{I,J}(f(x)).
\end{split}
\end{align}
Let $g$ be a product of $g_{I,J}$'s or a product of sums of $g_{I,J}$'s. Then $g$ is real analytic. We would like to show that it is not identically zero so that we can apply Lemma~\ref{prop:generic}.

To start with, consider polynomial $f$. Then we want to find an appropriate decomposition $f(x)=A(x)DB(x)^T$ where $A$ and $B$ are matrix-valued real analytic functions and $D\in\R^{N\times N}$ is a diagonal matrix with no zeros.
Let $|I|=|J|=s\le \min\{m,n,N\}$ and enumerate $I$ as $\{i_1,\ldots,i_s\}$ and $J$ as $\{j_1,\ldots,j_s\}$.
Then, by the Cauchy-Binet formula,
\begin{align*}
    \det_{I,J}\left(A(x)DB(x)^T\right) &= \sum_{S\in\binom{[N]}{s}}\left(\prod_{\ell\in S} D_{\ell}\right)\det_{I,S}(A(x))\det_{J,S}(B(x))\\
    &\coloneqq \sum_{S\in\binom{[N]}{s}}\left(\prod_{\ell\in S} D_{\ell}\right)p_S(x),
\end{align*}
and by Leibniz's determinant formula,
\begin{align*}
    p_S(x) &= \left(\sum_{\sigma:I\bij S}\sgn(\sigma)\prod_{t=1}^s a_{i_t,\sigma(i_t)}(x)\right)\left(\sum_{\tau:J\bij S}\sgn(\tau)\prod_{t=1}^s b_{j_t,\tau(j_t)}(x)\right)\\
    &\coloneqq p_{1,S}(x)p_{2,S}(x)\\
    &=\sum_{\sigma:I\bij S}\sum_{\tau:J\bij S}\sgn(\sigma)\sgn(\tau)\prod_{t=1}^s a_{i_t,\sigma(i_t)}(x)b_{j_t,\tau(j_t)}(x).
\end{align*}
We call the monomials corresponding to the individual terms in each sum the ``Leibniz monomials'' of $p_{1,S}$, $p_{2,S}$, and $p_S$ respectively. Note that when we collect terms, the coefficient of a particular Leibniz monomial may or may not be zero.

We want to show that the non-identically zero $p_S$ are linearly independent and that there is at least one of them. One way we will do this is by showing that, for each $p_S$, there is a monomial with nonzero coefficient in $p_S$ and zero coefficient in all other $p_{S'}$.

If the non-identically zero $p_S$ are linearly independent and there is at least one of them, then $g_{I,J}$ is identically zero if and only if $\prod_{\ell \in S}D_{\ell}=0$ for all $S$ such that $p_S$ is not identically zero, which can't happen if $D$ has no zeros. Thus, $g$ equal to the product of all $g_{I,J}$ such that $|I|=|J|\le \min\{m,n,N\}$ is not identically zero. And so, all order $\min\{m,n,N\}$ or less minors of $f(x)$ are nonzero for generic $x$. On the other hand, all minors of order $>N$ are zero for arbitrary $x$. Thus, the rank and Kruskal rank of $f(x)$ are precisely $\min\{m,n,N\}$ for generic $x$.

Now, consider the setting where $f$ is not a polynomial. Since $g_{I,J}$ is real analytic, it is uniquely determined around zero by its power series expansion there (by Taylor's theorem). For any $K\in\N$, let $g_{I,J}^K$ be the sum of the monomials in the Taylor series of $g_{I,J}$ that we get by truncating the Taylor series of $f$ to only degree $K$ monomials. Then we can take $K$ as high as we want and reduce to the corresponding result for polynomial $f$.

\subsection{Matrix products}
\label{sec:product}

As a warm-up, we will derive the generic rank of matrix products. Let $A\in\R^{m\times d}$ and $B\in\R^{n\times d}$. Let $I\subset [m]$ and $J\subset [n]$ such that $|I|=|J|=s\le \min\{m,n,d\}$. Enumerate $I$ as $\{i_1,\ldots,i_s\}$ and $J$ as $\{j_1,\ldots,j_s\}$. We want to show that the $(I,J)$ minor of $AB^T$ is not identically zero.
By the Cauchy-Binet formula,
\begin{align*}
    p(A,B) &\coloneqq \det_{I,J}\left(AB^T\right)\\
    &= \sum_{S\in\binom{[d]}{s}}\det_{I,S}(A)\det_{J,S}(B)\\
    &\coloneqq \sum_{S\in\binom{[d]}{s}}p_S(A,B),
\end{align*}
and by Leibniz's determinant formula,
\begin{align*}
    p_S &= \sum_{\sigma:I\bij S}\sum_{\tau:J\bij S}\sgn(\sigma)\sgn(\tau)\prod_{t=1}^s a_{i_t,\sigma(i_t)}b_{j_t,\tau(j_t)}.
\end{align*}
The $(\sigma,\tau)$ Leibniz monomial is the only one that involves $a_{i_1,\sigma(i_1)},\ldots,a_{i_s,\sigma(i_s)}$ and $\\b_{j_1,\tau(j_1)},\ldots,b_{j_s,\tau(j_s)}$ and so, after collecting terms, it has coefficient $\sgn(\sigma)\sgn(\tau)\neq 0$ in $p_S$. Moreover, if the $(\sigma,\tau)$ monomial has nonzero coefficient in $p_{S'}$, then there must exist $\sigma':I\bij S'$ and $\tau':J\bij S'$ such that the $(\sigma,\tau)$ monomial in $p_S$ equals the $(\sigma',\tau')$ monomial in $p_{S'}$, which implies $S=\sigma(I)=\sigma'(I)=S'$. Thus, for each $p_S$, there is a monomial with nonzero coefficient in $p_S$ and zero coefficient in all other $p_{S'}$. In particular, this implies that the list of polynomials $(p_S)$ is linearly independent. Thus, since $p$ is the sum of $(p_S)$, $p$ is not identically zero and so all order $\min\{m,n,d\}$ or less minors of $AB^T$ are nonzero for generic $(A,B)$. On the other hand, all minors of order $>d$ are zero for arbitrary $(A,B)$. Thus, the rank and Kruskal rank of $AB^T$ are precisely $\min\{m,n,d\}$ for generic $(A,B)$. We discuss further implications of this result in Appendix~\ref{sec:directsum}.

\subsection{Hadamard powers}
\label{sec:damm}

In order to derive the generic rank of $\left(AB^T\right)^{(k)}$ we will decompose it and then follow steps similar to the previous section.

\begin{theorem}
\label{thm:damm}
Let $(A,B)\in\R^{m\times d}\times\R^{n\times d}$. Let $k\in\N$. Then the rank and Kruskal rank of $(AB^T)^{(k)}$ are $\min\{m,n,\multi{d}{k}\}$ for generic $(A,B)$.
\end{theorem}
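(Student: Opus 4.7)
The plan is to follow the decomposition strategy laid out in Section~\ref{sec:mainidea}. Using the multinomial theorem on each entry
\begin{align*}
\bigl((AB^T)^{(k)}\bigr)_{i,j} = \Bigl(\sum_{\ell=1}^d a_{i,\ell} b_{j,\ell}\Bigr)^k = \sum_{\kvec\in\compsk(k)} \binom{k}{k_1,\ldots,k_d} \prod_{\ell=1}^d a_{i,\ell}^{k_\ell} \prod_{\ell=1}^d b_{j,\ell}^{k_\ell},
\end{align*}
one obtains a factorization $(AB^T)^{(k)} = \ta\, D\, \tb^T$, where $\ta\in\R^{m\times\multi{d}{k}}$ and $\tb\in\R^{n\times\multi{d}{k}}$ have columns indexed by $\compsk(k)$ with entries $\ta_{i,\kvec} = \prod_\ell a_{i,\ell}^{k_\ell}$ and $\tb_{j,\kvec} = \prod_\ell b_{j,\ell}^{k_\ell}$, and $D$ is diagonal with strictly positive entries $D_{\kvec} = \binom{k}{k_1,\ldots,k_d}$. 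This immediately gives the upper bound $\rank((AB^T)^{(k)}) \le \multi{d}{k}$ for every $(A,B)$, so combined with the trivial $m,n$ bounds the rank is at most $\min\{m,n,\multi{d}{k}\}$ everywhere.

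Next, fix $I\subset[m]$ and $J\subset[n]$ with $|I|=|J|=s\le\min\{m,n,\multi{d}{k}\}$, and enumerate $I=\{i_1,\ldots,i_s\}$, $J=\{j_1,\ldots,j_s\}$. By Lemma~\ref{lma:binetdiag},
\begin{align*}
\det_{I,J}\bigl((AB^T)^{(k)}\bigr) = \sum_{S\in\binom{\compsk(k)}{s}} \Bigl(\prod_{\kvec\in S} D_{\kvec}\Bigr)\, p_S, \qquad p_S \coloneqq \det_{I,S}(\ta)\,\det_{J,S}(\tb),
\end{align*}
and by Leibniz's formula each $p_S$ is a signed sum, indexed by pairs $(\sigma,\tau)$ with $\sigma\colon I\bij S$ and $\tau\colon J\bij S$, of Leibniz monomials of the form
\begin{align*}
M_{\sigma,\tau} = \prod_{t=1}^s \prod_{\ell=1}^d a_{i_t,\ell}^{\sigma(i_t)_\ell}\, b_{j_t,\ell}^{\tau(j_t)_\ell}.
\end{align*}
Mirroring the argument in Section~\ref{sec:product}, the key step is to show that each $M_{\sigma,\tau}$ uniquely identifies the triple $(S,\sigma,\tau)$: within row $i_t$ the exponent vector on $(a_{i_t,1},\ldots,a_{i_t,d})$ equals $\sigma(i_t)\in\compsk(k)$, so $M_{\sigma,\tau}$ recovers the entire bijection $\sigma$ and in particular $S=\sigma(I)$; symmetrically for $\tau$. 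No cancellation therefore occurs: after collecting terms, $M_{\sigma,\tau}$ has coefficient $\sgn(\sigma)\sgn(\tau)\neq 0$ in $p_S$ and coefficient $0$ in every $p_{S'}$ with $S'\neq S$.

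Consequently each $p_S$ is a non-identically zero polynomial, the family $(p_S)_S$ is linearly independent, and since the $D_{\kvec}$ are strictly positive the entire $(I,J)$ minor is a nonzero polynomial in the entries of $(A,B)$. Applying Lemma~\ref{prop:generic} to the rank and Kruskal rank condition functions from Section~\ref{sec:rankprelims}---which are finite sums, and products of sums, of squared minors and hence not identically zero---then gives that both the rank and Kruskal rank equal $\min\{m,n,\multi{d}{k}\}$ for generic $(A,B)$. The main subtlety is the distinguishing-monomial step: because the columns of $\ta$ are themselves monomials in the $a_{i,\ell}$ rather than independent indeterminates, distinct Leibniz terms could a priori collide after expansion. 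The row structure---each row index $i_t$ appears in exactly one factor and governs its own block of $d$ exponents---is precisely what prevents this collision, and it is the feature that will also have to be preserved when the argument is later extended to handle Khatri-Rao products and non-polynomial activations.
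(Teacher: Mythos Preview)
Your proposal is correct and follows essentially the same approach as the paper: the same factorization $(AB^T)^{(k)}=\ta D\tb^T$ with multinomial diagonal, the same Cauchy--Binet/Leibniz expansion, and the same distinguishing-monomial argument showing that the row-$i_t$ exponent vector recovers $\sigma(i_t)$ (and symmetrically $\tau(j_t)$), so that the $p_S$ are linearly independent and the minor is not identically zero. The only differences are cosmetic---you derive the decomposition via the entrywise multinomial theorem rather than by expanding the Hadamard power of $\sum_\ell a_\ell b_\ell^T$---and your closing remark about the row structure being what prevents collisions is exactly the point the paper exploits in the later Khatri--Rao extensions.
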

\begin{proof}
First,
\begin{align*}
    \left(AB^T\right)^{(k)} &= \left(\sum_{i=1}^d a_ib_i^T\right)^{(k)}\\
    &=\sum_{i_1,\ldots,i_k=1}^d \left(a_{i_1}b_{i_1}^T\right)\circ\cdots\circ\left(a_{i_k}b_{i_k}^T\right)\\
    &=\sum_{i_1,\ldots,i_k=1}^d (a_{i_1}\circ \cdots\circ a_{i_k})(b_{i_1}\circ \cdots\circ b_{i_k})^T\\
    &\overset{*}{=}\sum_{\kvec\in \compsk(k)}\binom{k}{k_1,\ldots,k_d}\left(a_1^{(k_1)}\circ\cdots\circ a_d^{(k_d)}\right)\left(b_1^{(k_1)}\circ\cdots\circ b_d^{(k_d)}\right)^T\\
    &\coloneqq \ta D \tb^T
\end{align*}
where $A=[a_1|\cdots|a_d]$, $B=[b_1|\cdots|b_d]$, $(*)$ follows from the commutativity of the Hadamard product, the columns of $\ta$ are the unique Hadamard products of the columns of $A$, the columns of $\tb$ are the unique Hadamard products of the columns of $B$, and $D$ is a diagonal matrix with the corresponding multinomial coefficients. Note that
\begin{align*}
    \ta = \left[\prodl a_{i,\ell}^{k_{\ell}}\right]_{(i,\kvec)\in [m]\times \compsk(k)}\text{ and }\tb = \left[\prodl b_{i,\ell}^{k_{\ell}}\right]_{(i,\kvec)\in [n]\times \compsk(k)}.
\end{align*}
The inner dimension is $|\compsk(k)|=\multi{d}{k}$.
Let $I\subset [m]$ and $J\subset [n]$ such that $|I|=|J|=s\le \min\{m,n,\multi{d}{k}\}$. Enumerate $I$ as $\{i_1,\ldots,i_s\}$ and $J$ as $\{j_1,\ldots,j_s\}$.
We want to show that the $(I,J)$ minor is not identically zero.
By the Cauchy-Binet formula,
\begin{align*}
    p(A,B) &\coloneqq \det_{I,J}\left(\ta D\tb^T\right)\\
    &= \sum_{S\in\binom{\compsk(k)}{s}}\left(\prod_{\ell\in S} D_{\ell}\right)\det_{I,S}\left(\ta\right)\det_{J,S}\left(\tb\right)\\
    &\coloneqq \sum_{S\in\binom{\compsk(k)}{s}}\left(\prod_{\ell\in S} D_{\ell}\right)p_S(A,B),
\end{align*}
and by Leibniz's determinant formula,
\begin{align*}
    p_S &= \sum_{\sigma:I\bij S}\sum_{\tau:J\bij S}\sgn(\sigma)\sgn(\tau)\prod_{t=1}^s\prod_{\ell=1}^d a_{i_t,\ell}^{\sigma(i_t)_{\ell}}b_{j_t,\ell}^{\tau(j_t)_{\ell}}.
\end{align*}

Consider the $(\sigma,\tau)$ Leibniz monomial:
\begin{align*}
\begin{matrix}
a_{i_1,1}^{\sigma(i_1)_{1}}&\cdots&a_{i_1,d}^{\sigma(i_1)_{d}}\\
\vdots&\ddots&\vdots\\
a_{i_s,1}^{\sigma(i_s)_{1}}&\cdots&a_{i_s,d}^{\sigma(i_s)_{d}}
\end{matrix}\quad
\begin{matrix}
b_{j_1,1}^{\tau(j_1)_{1}}&\cdots&b_{j_1,d}^{\tau(j_1)_{d}}\\
\vdots&\ddots&\vdots\\
b_{j_s,1}^{\tau(j_s)_{1}}&\cdots&b_{j_s,d}^{\tau(j_s)_{d}}
\end{matrix}.
\end{align*}
Suppose it is equal to the $(\sigma',\tau')$ Leibniz monomial. Then $\sigma(i_t)=\sigma'(i_t)$ and $\tau(j_t)=\tau'(j_t)$ for each $t\in[s]$. In other words, $\sigma=\sigma'$ and $\tau=\tau'$. Thus, collecting terms, the monomial $\prod_{t=1}^s\prod_{\ell=1}^d a_{i_t,\ell}^{\sigma(i_t)_{\ell}}b_{j_t,\ell}^{\tau(j_t)_{\ell}}$ has coefficient $\sgn(\sigma)\sgn(\tau)\neq 0$ in $p_S$.

Moreover, if the $\prod_{t=1}^s\prod_{\ell=1}^d a_{i_t,\ell}^{\sigma(i_t)_{\ell}}b_{j_t,\ell}^{\tau(j_t)_{\ell}}$ monomial has nonzero coefficient in $p_{S'}$, then there must exist $\sigma':I\bij S'$ and $\tau':J\bij S'$ such that it equals $\prod_{t=1}^s\prod_{\ell=1}^d a_{i_t,\ell}^{\sigma'(i_t)_{\ell}}b_{j_t,\ell}^{\tau'(j_t)_{\ell}}$, which implies $S=\sigma(I)=\sigma'(I)=S'$.

Thus, for each $p_S$, there is a monomial, namely $\prod_{t=1}^s\prod_{\ell=1}^d a_{i_t,\ell}^{\sigma(i_t)_{\ell}}b_{j_t,\ell}^{\tau(j_t)_{\ell}}$, with nonzero coefficient in $p_S$ and zero coefficient in all other $p_{S'}$. In particular, this implies that the list of polynomials $(p_S)$ is linearly independent. Thus, since $p$ is a linear combination of $(p_S)$ such that the coefficients in the combination are nonzero (the coefficients are products of multinomial coefficients), $p$ is not identically zero and so all order $\min\{m,n,\multi{d}{k}\}$ or less minors of $(AB^T)^{(k)}$ are nonzero for generic $(A,B)$. On the other hand, all minors of order $>\multi{d}{k}$ are zero for arbitrary $(A,B)$. The conclusion follows.
\end{proof}

\subsection{Polynomial Hadamard functions}
\label{sec:poly}

The proof of the previous section easily extends to $\sum_{k=0}^Kc_k(AB^T)^{(k)}$.

\begin{theorem}
\label{thm:poly}
Let $(A,B)\in\R^{m\times d}\times\R^{n\times d}$. Let $K\in \N$ and let $(c_k)$ be a sequence in $\R$. Then the rank and Kruskal rank of
\begin{align*}
    \sum_{k=0}^K c_k \left(AB^T\right)^{(k)}
\end{align*}
are
\begin{align*}
    \min\Bigg\{m,n,\sum_{k=0}^K \ind{c_k\neq 0}\amultib{d}{k}\Bigg\}
\end{align*}
for generic $(A,B)$.
\end{theorem}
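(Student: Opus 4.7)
My proof follows the template of Theorem~\ref{thm:damm}, modified to track which diagonal entries of the decomposition vanish when some $c_k = 0$. Starting from the multinomial expansion already used in the proof of Theorem~\ref{thm:damm}, I decompose
\begin{align*}
    \sum_{k=0}^K c_k \left(AB^T\right)^{(k)} = \ta D \tb^T
\end{align*}
where the columns of $\ta$ and $\tb$ are indexed by the weak compositions in $\comps(K)$, the column of $\ta$ indexed by $\kvec \in \compsk(k)$ is $a_1^{(k_1)} \circ \cdots \circ a_d^{(k_d)}$ (similarly for $\tb$), and $D$ is diagonal with $D_\kvec = c_{|\kvec|} \binom{|\kvec|}{k_1, \ldots, k_d}$. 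The inner dimension is $|\comps(K)| = \binom{K+d}{d}$.

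The key observation is that $D_\kvec = 0$ precisely when $c_{|\kvec|} = 0$. Let $T \coloneqq \{\kvec \in \comps(K) : c_{|\kvec|} \neq 0\}$; then $|T| = \sum_{k=0}^K \ind{c_k \neq 0} \multi{d}{k}$, and the product $\ta D \tb^T$ only depends on the columns indexed by $T$, so I may equivalently work with $\ta_{[m],T}\, D_{T,T}\, \tb_{[n],T}^T$, whose restricted diagonal has no zeros. The upper bound follows immediately: since the inner dimension of this factorization is $|T|$, every minor of order exceeding $\min\{m, n, |T|\}$ vanishes identically on $(A,B)$.

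For the lower bound, I fix $I \subset [m]$ and $J \subset [n]$ with $|I| = |J| = s \le \min\{m, n, |T|\}$ and mimic the argument of Theorem~\ref{thm:damm} with $\compsk(k)$ replaced by $T$. Cauchy--Binet (via Lemma~\ref{lma:binetdiag}) writes the $(I,J)$-minor as $\sum_{S \in \binom{T}{s}} \bigl(\prod_{\kvec \in S} D_\kvec\bigr) p_S(A,B)$, and Leibniz's formula expresses each $p_S$ as a signed sum of monomials of the form $\prod_{t=1}^s \prod_{\ell=1}^d a_{i_t, \ell}^{\sigma(i_t)_\ell} b_{j_t,\ell}^{\tau(j_t)_\ell}$. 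Exactly as before, the exponents in such a monomial recover the tuple $\sigma(i_t) \in T$ for each $t$, hence recover $\sigma(I) = S$; so the $(\sigma,\tau)$ monomial appears in $p_S$ with coefficient $\sgn(\sigma)\sgn(\tau) \neq 0$ and does not appear in any $p_{S'}$ with $S' \neq S$. The list $(p_S)_{S \in \binom{T}{s}}$ is therefore linearly independent, and since each weight $\prod_{\kvec \in S} D_\kvec$ is a nonzero product of $c_k$'s and multinomial coefficients, the minor is a non-identically zero polynomial in the entries of $(A,B)$.

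Multiplying the squared minor expressions over all relevant $(I, J)$ (and inserting the outer product over $J$ alone for the Kruskal rank condition from Section~\ref{sec:rankprelims}) produces a single non-identically zero real analytic function on $\R^{m \times d} \times \R^{n \times d}$; Lemma~\ref{prop:generic} then gives a generic set of $(A,B)$ on which every minor of order $\min\{m, n, |T|\}$ is nonzero, yielding the claimed value for both the rank and the Kruskal rank. The only genuinely new ingredient relative to Theorem~\ref{thm:damm} is the bookkeeping that restricts the decomposition to the support $T$ of $D$; the combinatorial core---identifying a signature Leibniz monomial in each $p_S$---is unchanged, so I do not anticipate any substantive obstacle.
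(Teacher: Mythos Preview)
Your proposal is correct and mirrors the paper's own proof essentially verbatim: the paper likewise restricts the decomposition $\ta D\tb^T$ to the index set $\comps(K,(c_k))=\{\kvec\in\comps(K):c_{|\kvec|}\neq 0\}$ (your $T$) and then invokes the argument of Theorem~\ref{thm:damm} with $\compsk(k)$ replaced by this set. The only cosmetic difference is that you first write the full $\comps(K)$-indexed decomposition and then prune the zero diagonal entries, whereas the paper builds the restricted decomposition directly.
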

\begin{proof}
First, our decomposition is
\begin{align*}
    &\sum_{k=0}^K c_k \left(AB^T\right)^{(k)}\\
    &= \sum_{\substack{k=0,\ldots,K\\c_k\neq 0}} c_k \sum_{\kvec\in \compsk(k)}\binom{k}{k_1,\ldots,k_d}\left(a_1^{(k_1)}\circ\cdots\circ a_d^{(k_d)}\right)\left(b_1^{(k_1)}\circ\cdots\circ b_d^{(k_d)}\right)^T\\
    &\coloneqq \ta D \tb^T
\end{align*}
with inner indices in
\begin{align*}
    \comps(K,(c_k)) \coloneqq \{\kvec\in \Z_+^d\mid k_1+\cdots+k_d\le K,c_{k_1+\cdots+k_d}\neq 0\}.
\end{align*}
which has size
\begin{align*}
    \sum_{k=0}^K \ind{c_k\neq 0}\amultib{d}{k}.
\end{align*}
The rest of the proof proceeds as in the proof of Theorem~\ref{thm:damm}, except with $\compsk(k)$ replaced with $\comps(K,(c_k))$.
\end{proof}

\section{Results involving the Khatri-Rao product}
\label{sec:main}

Here we derive the generic ranks of $B^T\rao (AB^T)^{(k)}$, $B^T\rao \sum_{k=0}^K c_k (AB^T)^{(k)}$, $\phi(AB^T)$, and $B^T\rao \phi(AB^T)$ in Sections~\ref{sec:khatri}, \ref{sec:polykhatri}, \ref{sec:analytic}, and \ref{sec:analytic} respectively.

To do so, we will employ the fact that the non-identically zero minors of ``rectangular block diagonal'' matrices are precisely the ones corresponding to (square) block diagonal submatrices, in which case the determinant is the product of the determinants of the blocks.

\begin{lemma}
\label{lma:block}
Let $A_{\ell}\in\R^{m\times n}~\forall \ell\in[d]$ and let
\begin{align*}
    A = \begin{bmatrix}
        A_1&&\\
        &\ddots&\\
        &&A_d
    \end{bmatrix}
\end{align*}
with everything else zero. Let $I_{\ell}\subset [m(\ell-1)+1:m\ell]$ and $J_{\ell}\subset [n(\ell-1)+1:n\ell]$ for all $\ell\in[d]$. Let $I=\cupl I_{\ell}$ and $J=\cupl J_{\ell}$. Suppose $|I|=|J|\le \min\{md,nd\}$. Then $|I_{\ell}|=|J_{\ell}|$ for all $\ell\in[d]$ is necessary for $\det_{I,J}(A)$ to be nonzero and sufficient for $\det_{I,J}(A)$ to be equal to
\begin{align*}
    \prodl \det_{I_{\ell},J_{\ell}}(A_{\ell}).
\end{align*}
\end{lemma}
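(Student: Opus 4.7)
The plan is to apply Leibniz's determinant formula to the square submatrix $A_{I,J}$ and exploit the fact that $A_{i,j}=0$ whenever $i$ and $j$ do not lie in the same diagonal block. Because of the lexicographic ordering convention recalled in Section~\ref{sec:prelims} (rows of block $\ell$ come entirely before rows of block $\ell+1$, and similarly for columns), the submatrix $A_{I,J}$ inherits a rectangular block diagonal structure: its $(\ell,\ell)$ block is the submatrix $(A_\ell)_{I_\ell,J_\ell}$ of size $|I_\ell|\times|J_\ell|$, and everything off that block diagonal is zero.

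Next I would expand
\begin{align*}
    \det_{I,J}(A) = \sum_{\sigma:I\bij J} \sgn(\sigma)\prod_{i\in I}A_{i,\sigma(i)}
\end{align*}
and observe that any term with $\sigma(i)\notin J_\ell$ for some $i\in I_\ell$ contains a factor equal to zero. Hence only the block-preserving $\sigma$, i.e.\ those with $\sigma(I_\ell)\subset J_\ell$ for every $\ell$, can contribute. Since $\sigma$ is a bijection and $|I|=|J|$, such a block-preserving $\sigma$ restricts to bijections $\sigma_\ell:I_\ell\bij J_\ell$, which forces $|I_\ell|=|J_\ell|$ for every $\ell$; this gives the necessity part.

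For sufficiency, assume $|I_\ell|=|J_\ell|$ for all $\ell$. Then the block-preserving bijections $I\bij J$ are in canonical correspondence with tuples $(\sigma_1,\ldots,\sigma_d)$ where $\sigma_\ell:I_\ell\bij J_\ell$. Using that rows and columns of block $\ell$ form contiguous lexicographic segments, I would show that the sign of the concatenated bijection factors: $\sgn(\sigma)=\prodl \sgn(\sigma_\ell)$ (no inversions are introduced across blocks because the blocks do not interleave). This lets me split the Leibniz sum into a product and recognize each factor as the Leibniz expansion of the corresponding block determinant:
\begin{align*}
    \det_{I,J}(A) = \prodl \sum_{\sigma_\ell:I_\ell\bij J_\ell}\sgn(\sigma_\ell)\prod_{i\in I_\ell}(A_\ell)_{i,\sigma_\ell(i)} = \prodl \det_{I_\ell,J_\ell}(A_\ell).
\end{align*}

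The only technical point is the sign-factorization claim, which is the one step I would write out carefully; everything else is a straightforward unpacking of the Leibniz formula. A purely notational subtlety is that $I_\ell\subset[m(\ell-1)+1:m\ell]$ indexes rows of $A$, but the notation $\det_{I_\ell,J_\ell}(A_\ell)$ refers to the corresponding rows of the $m\times n$ block $A_\ell$ after the obvious shift; this identification is implicit in the statement and causes no difficulty.
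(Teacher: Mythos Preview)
Your proposal is correct and, for the sufficiency half, essentially identical to the paper's argument: both restrict the Leibniz expansion to block-preserving permutations and then factor the sum into the product of the block determinants (the paper skips the sign-factorization step, simply calling the whole thing ``a well known fact''). For the necessity half you also proceed via Leibniz---showing that no block-preserving bijection $I\bij J$ exists unless all $|I_\ell|=|J_\ell|$---whereas the paper instead gives a one-line rank argument: if $|I_\ell|<|J_\ell|$ the columns of $A_{I,J}$ landing in block $\ell$ are linearly dependent (and symmetrically for rows). Both routes are standard and equally short; yours has the minor virtue of using a single tool throughout.
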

\begin{proof}
To prove the necessary part, suppose $|I_{\ell}|\neq |J_{\ell}|$ for some $\ell\in[d]$. If $|I_{\ell}|< |J_{\ell}|$, then the columns corresponding to $A_{\ell}$ in the $(I,J)$ submatrix are linearly dependent; if $|I_{\ell}|> |J_{\ell}|$ then the rows are; either way, $\det_{I,J}(A)=0$. The sufficient part is a well known fact and can be proved with Leibniz's determinant formula: the only permutations that give non-zero products are the ones that permute within each $A_{\ell}$ separately, and so we can factor out $\det_{I_{\ell},J_{\ell}}(A_{\ell})$ for each $\ell\in[d]$.
\end{proof}

\subsection{Hadamard powers with Khatri-Rao products}
\label{sec:khatri}

Here we derive the generic rank of $B^T\rao (AB^T)^{(k)}$.

\begin{theorem}
\label{thm:khatri}
Let $(A,B)\in\R^{m\times d}\times\R^{n\times d}$. Let $k\in\N$. Then the rank and Kruskal rank of $B^T\rao (AB^T)^{(k)}$ are $\min\{md,n,\multi{d}{k+1}\}$ for generic $(A,B)$.
\end{theorem}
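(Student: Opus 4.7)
The plan is to extend the Cauchy-Binet plus Leibniz strategy of Theorem~\ref{thm:damm} via a novel decomposition of $B^T \rao (AB^T)^{(k)}$ with inner dimension $\multi{d}{k+1}$. Starting from the identity $(AB^T)^{(k)} = \sum_{\kvec \in \compsk(k)} \binom{k}{k_1, \ldots, k_d}\bigl(a_1^{(k_1)} \circ \cdots \circ a_d^{(k_d)}\bigr)\bigl(b_1^{(k_1)} \circ \cdots \circ b_d^{(k_d)}\bigr)^T$ from Theorem~\ref{thm:damm}, multiplying the $(i, j)$ entry by $b_{j, \ell}$ shifts the summation index from $\kvec \in \compsk(k)$ to $\kvec' = \kvec + e_\ell \in \compsk(k+1)$. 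Re-indexing yields $B^T \rao (AB^T)^{(k)} = \widetilde{A}' D' \widetilde{B}'^T$, where $\widetilde{B}'$ is $n \times \multi{d}{k+1}$ with entries $(j, \kvec') \mapsto \prod_\ell b_{j, \ell}^{k'_\ell}$, $D'$ is a strictly positive diagonal of normalized multinomial coefficients, and $\widetilde{A}'$ is $md \times \multi{d}{k+1}$ with entries $((\ell, i), \kvec') \mapsto k'_\ell\, a_{i, \ell}^{k'_\ell - 1} \prod_{\ell' \neq \ell} a_{i, \ell'}^{k'_{\ell'}}$. The inner dimension of this factorization immediately gives the upper bound $\multi{d}{k+1}$ on the rank, and $md$, $n$ are trivial.

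For the lower bound I would apply Lemma~\ref{lma:binetdiag} to an $(I, J)$-minor of size $s = \min\{md, n, \multi{d}{k+1}\}$, expanding it as $\sum_{S \in \binom{\compsk(k+1)}{s}} \bigl(\prod_{\kvec' \in S} D'_{\kvec'}\bigr)\, p_S$ with $p_S := \det_{I, S}(\widetilde{A}')\,\det_{J, S}(\widetilde{B}')$. The $\widetilde{B}'$ side is handled exactly as in Theorem~\ref{thm:damm}: each Leibniz contribution for $\tau : J \bij S$ carries a distinct $B$-monomial $\prod_{j \in J} \prod_\ell b_{j, \ell}^{\tau(j)_\ell}$ whose multi-exponents identify $\tau(J) = S$. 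Distinct $S$ therefore contribute disjoint $B$-monomial supports, so the $p_S$ are linearly independent as polynomials in $(A, B)$, and the full minor is nonzero as soon as a single $p_S$ is not identically zero.

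The key difficulty is exhibiting $I$ and $S$ with $\det_{I, S}(\widetilde{A}') \not\equiv 0$ in $A$. Unlike in Theorem~\ref{thm:damm}, the factor $a_{i, \ell}^{k'_\ell - 1}$ in row $(\ell, i)$ of $\widetilde{A}'$ erases the original row index $\ell$ from its associated monomial, so Leibniz monomials of $\det_{I, S}(\widetilde{A}')$ are not automatically unique. To work around this I would rewrite $\widetilde{A}' D' = (I_d \otimes (\widetilde{A} D_0))\, E$, where $D_0 = \diag\bigl(\binom{k}{\kvec}\bigr)$ and $E \in \{0, 1\}^{d \multi{d}{k} \times \multi{d}{k+1}}$ is the shift-incidence matrix $E_{(\ell_0, \kvec), \kvec'} = \ind{\kvec' = \kvec + e_{\ell_0}}$. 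Since $I_d \otimes (\widetilde{A} D_0)$ is block diagonal with $d$ identical blocks, a further application of Cauchy-Binet on this product combined with Lemma~\ref{lma:block} reduces $\det_{I, S}(\widetilde{A}' D')$ to a signed sum of products of block-minors $\prod_\ell \det_{I_\ell, T_\ell}(\widetilde{A} D_0)$ indexed by intermediate subsets $T_\ell \subset \compsk(k)$ with $|T_\ell| = |I_\ell|$; each such block-minor is non-identically-zero by Theorem~\ref{thm:damm} once row-block sizes $s_\ell = |I_\ell|$ are chosen feasibly with $\sum_\ell s_\ell = s$ and $s_\ell \leq \min\{m, \multi{d}{k}\}$.

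The main obstacle, where the novel work lies, is the signed cancellation introduced by $E$: each $\kvec' \in \compsk(k+1)$ has $|\mathrm{supp}(\kvec')|$ preimages $(\ell_0, \kvec' - e_{\ell_0})$ under $E$, so block-wise Leibniz contributions routed through different row-blocks can collide on the same target column $\kvec' \in S$. The plan is to select $(s_\ell)$ and $S$ in a support-aware manner---distributing columns $\kvec' \in S$ across row-blocks $\ell \in \mathrm{supp}(\kvec')$ consistently with the $(s_\ell)$---so that a distinguished block-product Leibniz monomial survives in the aggregated sum with a nonzero coefficient, the positive weights in $D_0$ preventing perfect cancellation. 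Once this is achieved the Kruskal rank statement follows automatically, since the $(A, I, S)$ construction is independent of the column subset $J$.
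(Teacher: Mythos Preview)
Your decomposition $B^T\rao (AB^T)^{(k)}=\widetilde A'\,D'\,\widetilde B'^T$ with inner dimension $\multi{d}{k+1}$ is correct and is the natural dual of the paper's choice. The paper instead writes $B^T\rao (AB^T)^{(k)}=\ba D\bb^T$ with inner dimension $d\multi{d}{k}$, where $\ba=I_d\otimes\ta$ is block diagonal and $\bb$ carries the repeated-column structure you encode in $E$; in your notation $\bb=\widetilde B'\,E^T$ and $\widetilde A'D'=\ba D E$. The paper keeps the $A$-side block diagonal (so $p_{1,S}$ factors cleanly) and pays with repeated columns on the $B$-side; you keep the $B$-side equal to the degree-$(k{+}1)$ Veronese matrix of Theorem~\ref{thm:damm} (so linear independence of the $p_{2,S}$'s and the upper bound $\multi{d}{k+1}$ are immediate) and absorb the bookkeeping into $\widetilde A'$. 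Either way the problem reduces to exhibiting $S\subset\compsk(k+1)$ of size $s$ together with a section $T$ of $\varphi:(\ell,\kvec)\mapsto\kvec+e_\ell$ over $S$ whose block sizes $|T_\ell|$ are all at most $m$. The paper does this explicitly by distributing the elements of $\compsk(k+1)$ across the $d$ blocks as evenly as possible; your ``support-aware'' selection is exactly this step, which you have left as a plan rather than carried out.

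The one genuine gap is your non-cancellation argument for $\det_{I,S}(\widetilde A')$. Positivity of $D_0$ does \emph{not} prevent the signed sum $\sum_T(\pm 1)\prod_\ell\det_{I_\ell,T_\ell}(\ta D_0)$ from vanishing, because the signs come from $\det_{T,S}(E)\in\{-1,0,1\}$, not from $D_0$. What actually rules out cancellation is that the polynomials $\prod_\ell\det_{I_\ell,T_\ell}(\ta)$ for distinct sections $T$ (with the same block-size profile) are linearly independent in $A$: each Leibniz monomial of such a product records, for every $i\in I_\ell$, the exponent vector $\kvec\in T_\ell$ it is matched with, and hence determines each $T_\ell$ and therefore $T$. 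This is exactly the mechanism from Theorem~\ref{thm:damm} applied blockwise; once you invoke it, the existence of a single admissible section $T$ makes the sum nonzero, and positivity of $D_0$ is needed only so that the surviving coefficient does not vanish.
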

\begin{proof}
Let $u\in\R^m$, $v\in\R^n$, and $W=[w_1|\cdots|w_d]\in\R^{n\times d}$. Then
\begin{align*}
    W^T \rao uv^T &= [w_{j,i_1}u_{i_2}v_j]_{(i_1,i_2),j}\\
    &=[e_1\otimes u|\cdots|e_d\otimes u][v\circ w_1|\cdots|v\circ w_d]^T\\
    &= \suml(e_{\ell}\otimes u)(v\circ w_{\ell})^T
\end{align*}
where the $e_{\ell}$ denote the basis vectors of $\R^d$. Thus,
\begin{align*}
    &B^T\rao \left(AB^T\right)^{(k)}\\
    &= B^T\rao \sum_{\kvec\in \compsk(k)}\binom{k}{k_1,\ldots,k_d}\left(a_1^{(k_1)}\circ\cdots\circ a_d^{(k_d)}\right)\left(b_1^{(k_1)}\circ\cdots\circ b_d^{(k_d)}\right)^T\\
    &= \suml\sum_{\kvec\in \compsk(k)} \binom{k}{k_1,\ldots,k_d}\left(e_{\ell}\otimes a_1^{(k_1)}\circ\cdots\circ a_d^{(k_d)}\right)\left(b_{\ell}\circ b_1^{(k_1)}\circ\cdots\circ b_d^{(k_d)}\right)^T\\
    &\coloneqq \ba D\bb^T
\end{align*}
where
\begin{align*}
    \ba = \begin{bmatrix}
        \ta&&\\&\ddots&\\&&\ta
    \end{bmatrix}\text{ with }\ta = \left[\prodl a_{i,\ell}^{k_{\ell}}\right]_{i\in[m],\kvec\in\compsk(k)},
\end{align*}
$D$ is a diagonal matrix with the corresponding multinomial coefficients, and
\begin{align*}
    \bb = \left[b_{i,j}\prodl b_{i,\ell}^{k_{\ell}}\right]_{i\in[n],(j,\kvec)\in [d]\times \compsk(k)}.
\end{align*}
Note that
\begin{align*}
    \bb=\begin{bmatrix}
        \diag(b_1)\tb&\cdots&\diag(b_d)\tb
    \end{bmatrix}\text{ where }\tb = \left[\prodl b_{i,\ell}^{k_{\ell}}\right]_{(i,\kvec)\in[n]\times\compsk(k)}.
\end{align*}
The inner dimension is $\multi{d}{k}d$.
Let $I\subset [md]$ and $J\subset [n]$ such that $|I|=|J|=s\le \min\{md,n,\multi{d}{k+1}\}$. Enumerate $I$ as $\{i_1,\ldots,i_s\}$ and $J$ as $\{j_1,\ldots,j_s\}$.
We want to show that the $(I,J)$ minor is not identically zero.
By the Cauchy-Binet formula,
\begin{align*}
    p(A,B) &\coloneqq \det_{I,J}\left(\ba D\bb^T\right)\\
    &= \sum_{S\in\binom{[d]\times\compsk(k)}{s}}\left(\prod_{\ell\in S} D_{\ell}\right)\underset{\coloneqq p_{1,S}(A)p_{2,S}(B)\coloneqq p_S(A,B)}{\underbrace{\det_{I,S}\left(\ba\right)\det_{J,S}\left(\bb\right)}}.
\end{align*}
Decompose $I$ and $S$ into $\cupl I_{\ell}$ and $\cupl S_{\ell}$ respectively, where $I_{\ell}\subset[m(\ell-1)+1:m\ell]$ and $S_{\ell}\subset[N(\ell-1)+1:N\ell]$. Then, by Lemma~\ref{lma:block}, $\det_{I,S}(\ba)$ can only be non-identically zero if $|I_{\ell}|=|S_{\ell}|$ for all $\ell\in[d]$. In this case, $\ba_{I,S}$ is block diagonal and so its determinant is the product of the determinants of its blocks, none of which are identically zero since the blocks are square submatrices of $\ta$ from the proof of Theorem~\ref{thm:damm}. Thus, $p_{1,S}$ is not identically zero if and only if $|I_{\ell}|=|S_{\ell}|$ for all $\ell\in[d]$.

On the other hand, $\bb$ has repeated columns. In particular, distinct columns $(i,\kvec)$ and $(j,\rvec)$ are equal if and only if $\kvec+e_i=\rvec+e_j$. Thus, we can break $[d]\times \compsk(k)$ up into the fibers of $\varphi:[d]\times \compsk(k)\to\!\!\!\!\to \compsk(k+1):(i,\kvec)\mapsto \kvec+e_i$. Then, $p_{2,S}$ is not identically zero if and only if $|\varphi(S)|=s$. Suppose $|\varphi(S)|=s$ and consider $p_{2,S}$. By Leibniz's determinant formula
\begin{align*}
    p_{2,S} &= \sum_{\tau:J\bij S}\sgn(\tau)\prod_{t=1}^s\prod_{\ell=1}^d b_{j_t,\ell}^{\varphi(\tau(j_t))_{\ell}}.
\end{align*}

Consider the $\tau$ Leibniz monomial:
\begin{align*}
\begin{matrix}
b_{j_1,1}^{\varphi(\tau(j_1))_{1}}&\cdots&b_{j_1,d}^{\varphi(\tau(j_1))_{d}}\\
\vdots&\ddots&\vdots\\
b_{j_s,1}^{\varphi(\tau(j_s))_{1}}&\cdots&b_{j_s,d}^{\varphi(\tau(j_s))_{d}}.
\end{matrix}
\end{align*}
Suppose it is equal to the $\tau'$ Leibniz monomial. Then $\varphi(\tau(j_t))=\varphi(\tau'(j_t))$ for each $t\in[s]$. Furthermore, since $|\varphi(S)|=s$, $\tau(j_t)=\tau'(j_t)$ for each $t\in[s]$. In other words, $\tau=\tau'$. Thus, collecting terms, the monomial $\prod_{t=1}^s\prod_{\ell=1}^d b_{j_t,\ell}^{\varphi(\tau(j_t))_{\ell}}$ has coefficient $\sgn(\tau)\neq 0$ in $p_{2,S}$.

Moreover, if the $\prod_{t=1}^s\prod_{\ell=1}^d b_{j_t,\ell}^{\varphi(\tau(j_t))_{\ell}}$ monomial has nonzero coefficient in $p_{2,S'}$ where $|\varphi(S')|=s$, then there must exist $\tau':J\bij S'$ such that it equals $\prod_{t=1}^s\prod_{\ell=1}^d b_{j_t,\ell}^{\varphi(\tau'(j_t))_{\ell}}$, which implies $S=\tau(J)=\tau'(J)=S'$.

Thus, for each $p_{2,S}$ such that $|\varphi(S)|=s$, there is a monomial, namely $\prod_{t=1}^s\prod_{\ell=1}^d b_{j_t,\ell}^{\varphi(\tau(j_t))_{\ell}}$, with nonzero coefficient in $p_{2,S}$ and zero coefficient in all other $p_{2,S'}$ with $|\varphi(S')|=s$. Thus, the $p_{2,S}(B)$ which are not identically zero are linearly independent polynomials, so the $p_{1,S}(A)p_{2,S}(B)$ which are not identically zero are linearly independent polynomials. Moreover, $p_{1,S}$ is not identically zero if and only if $|S_{\ell}|=|I_{\ell}|~\forall \ell\in [d]$ and $p_{2,S}$ is not identically zero if and only if $|\varphi(S)|=s$. Thus, all that is left to show is that there is at least one $S\in\binom{[d]\times \compsk(k)}{s}$ satisfying both these conditions.

Since $s\le \multi{d}{k+1}$, there is at least one $S$ such that $|\varphi(S)|=s$. But we need to make sure $|S_{\ell}|\le m$ so that we can pick $I_{\ell}\subset [m(\ell-1)+1:m\ell]$ such that $|I_{\ell}|=|S_{\ell}|$. We will construct such an $S$ by ``placing'' elements of $\compsk(k+1)$ into particular $S_{\ell}$.

First, we can count the number of weak compositions of $k+1$ into $d$ parts by counting the number with $i$ nonzero parts for each $i\in[d]$ and taking the sum. To count the number with $i$ nonzero parts, there are $\binom{d}{i}$ ways to choose which parts to be nonzero and $\binom{k}{i-1}$ compositions of $k+1$ into $i$ parts. Thus, $\sum_{i=1}^d\binom{d}{i}\binom{k}{i-1}=\multi{d}{k+1}$.

Now, let $i\in[d]$ and let $\alpha\subset [d]$ with $|\alpha|=i$. Let $\alpha$ be the nonzero indices of $\kvec\in\compsk(k+1)$. Then $\varphi^{-1}(\kvec)$ has $i$ elements, the $j$th being $(\alpha_j,\kvec-e_{\alpha_j})$. Thus, $\alpha$ determines which $S_{\ell}$ we can place $\kvec$ in. So, start with $i=1$. There is one $\kvec$ to place in each $S_{\ell}$. Then, continue in order for $i>1$. There are $\binom{d}{i}$ ways to choose $i$ $S_{\ell}$'s and $\binom{k}{i-1}$ $\kvec$'s to distribute among each choice. Thus, we can distribute them in such a way that, when we are done, each $S_{\ell}$ has either $\lfloor \frac{1}{d}\multi{d}{k+1}\rfloor$ or $\lceil \frac{1}{d}\multi{d}{k+1}\rceil$ elements. Finally, prune to $s$ elements total. If $md<\multi{d}{k+1}$, then we don't need to keep more than $\lfloor \frac{1}{d}\multi{d}{k+1}\rfloor$ elements from each $S_{\ell}$. If $md>\multi{d}{k+1}$, then we already have $|S_{\ell}|\le m$.

Thus, for all $J\subset [n]$ such that $|J|=s$, there is at least one $I\subset [m]$ such that $|I|=s$ and $g_{I,J}$ is not identically zero, returning to the notation of Section~\ref{sec:mainidea}. Let $s=\min\{md,n,\multi{d}{k+1}\}$. Then we have shown that
\begin{align*}
    g = \prod_{\substack{J\subset [n]\\|J|\le s}}\sum_{\substack{I\subset [md]\\ |I|=|J|}}g_{I,J}^2
\end{align*}
is not identically zero. On the other hand, all minors of order $>\multi{d}{k+1}$ are zero for arbitrary $(A,B)$ since, in this case, $\bb_{J,S}$ will have repeated columns for all $S$. The conclusion follows.
\end{proof}

\subsection{Polynomial Hadamard functions with Khatri-Rao products}
\label{sec:polykhatri}

The proof of the previous section easily extends from a monomial Hadamard function to a polynomial Hadamard function.

\begin{theorem}
\label{thm:polykhatri}
Let $(A,B)\in\R^{m\times d}\times\R^{n\times d}$. Let $K\in \N$ and let $(c_k)$ be a sequence in $\R$. Then the rank and Kruskal rank of
\begin{align*}
    B^T\rao \sum_{k=0}^K c_k \left(AB^T\right)^{(k)}
\end{align*}
are
\begin{align*}
    \min\Bigg\{md,n,\sum_{k=0}^K \ind{c_k\neq 0}\amultib{d}{k+1}\Bigg\}
\end{align*}
for generic $(A,B)$.
\end{theorem}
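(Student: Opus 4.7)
The plan is to mirror the Theorem~\ref{thm:poly}-style extension of Theorem~\ref{thm:damm}, but applied to the Khatri--Rao proof of Theorem~\ref{thm:khatri}. Concretely, I would first write the decomposition
\begin{align*}
    B^T\rao \sum_{k=0}^K c_k(AB^T)^{(k)} \;=\; \ba D \bb^T,
\end{align*}
where, exactly as in Theorem~\ref{thm:khatri}, $\ba$ is block-diagonal with $d$ identical blocks
\begin{align*}
    \ta = \left[\prodl a_{i,\ell}^{k_{\ell}}\right]_{i\in[m],\,\kvec\in\comps(K,(c_k))},
\end{align*}
the matrix $\bb$ is the horizontal stacking $[\diag(b_1)\tb|\cdots|\diag(b_d)\tb]$ built from $\tb = [\prodl b_{i,\ell}^{k_{\ell}}]_{i\in[n],\,\kvec\in\comps(K,(c_k))}$, and $D$ is diagonal with entries $c_{|\kvec|}\binom{|\kvec|}{k_1,\ldots,k_d}$, all of which are nonzero by construction of $\comps(K,(c_k))$.

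Next I would apply Cauchy--Binet and Leibniz as in Theorem~\ref{thm:khatri}, splitting the $(I,J)$ minor as $\sum_S (\prod_{\ell\in S}D_\ell)\, p_{1,S}(A)\, p_{2,S}(B)$. The key new observation is that the fiber map
\begin{align*}
    \varphi:[d]\times \comps(K,(c_k))\to \bigcup_{k:\,c_k\neq 0}\compsk(k+1),\qquad (i,\kvec)\mapsto \kvec+e_i,
\end{align*}
is graded, since $|\kvec+e_i| = |\kvec|+1$. In particular, columns of $\bb$ indexed by $(i,\kvec)$ and $(j,\rvec)$ with $|\kvec|\neq|\rvec|$ are automatically distinct, so the fiber-collapse analysis from Theorem~\ref{thm:khatri} applies grade-by-grade and the effective inner dimension is exactly $|\img \varphi| = \sum_{k:c_k\neq 0}\multi{d}{k+1}$.

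From here, the block-diagonal argument on $\ba$ (via Lemma~\ref{lma:block}) forces $|I_\ell|=|S_\ell|$ for every $\ell$, and the uniqueness of the leading Leibniz monomial $\prod_{t,\ell} b_{j_t,\ell}^{\varphi(\tau(j_t))_\ell}$ (when $|\varphi(S)|=s$) gives the linear independence of the non-identically zero $p_{2,S}$, hence of the $p_S$. To exhibit at least one viable $S\in\binom{[d]\times\comps(K,(c_k))}{s}$ with $|\varphi(S)|=s$ and $|S_\ell|\le m$ for every $\ell$, I would run the placement argument from Theorem~\ref{thm:khatri} independently within each $\compsk(k+1)$ (for those $k$ with $c_k\neq 0$), then concatenate and prune down to $s$ elements; because the placement balances $|S_\ell|$ across $\ell$ within each grade, the concatenation balances it across all grades. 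The upper bound follows as before: once $s$ exceeds $\sum_{k:c_k\neq 0}\multi{d}{k+1}$, every subset $S$ has $|\varphi(S)|<s$, forcing repeated columns in $\bb_{J,S}$.

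The only subtlety, rather than a real obstacle, is confirming the gradedness of $\varphi$ so that the monomial and fiber bookkeeping does not get entangled across different values of $k$; once this is noted, everything else is a cosmetic substitution of $\compsk(k)$ by $\comps(K,(c_k))$ and $\compsk(k+1)$ by $\bigcup_{k:c_k\neq 0}\compsk(k+1)$ in the proof of Theorem~\ref{thm:khatri}.
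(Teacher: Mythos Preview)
Your proposal is correct and follows essentially the same approach as the paper's own proof, which likewise just substitutes $\compsk(k)$ by $\comps(K,(c_k))$ and $\compsk(k+1)$ by $\{\kvec\in\Z_+^d\mid 1\le |\kvec|\le K+1,\ c_{|\kvec|-1}\neq 0\}=\bigcup_{k:c_k\neq 0}\compsk(k+1)$ in the proof of Theorem~\ref{thm:khatri}. Your explicit attention to the gradedness of $\varphi$ and the grade-by-grade placement argument in fact spells out a detail the paper leaves implicit.
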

\begin{proof}
Here, our decomposition is
\begin{align*}
    &B^T\rao\sum_{k=0}^K c_k \left(AB^T\right)^{(k)}\\
    &= \suml \sum_{\substack{k=0,\ldots,K\\c_k\neq 0}}c_k \sum_{\kvec\in \compsk(k)} \binom{k}{k_1,\ldots,k_d}\left(e_{\ell}\otimes a_1^{(k_1)}\circ\cdots\circ a_d^{(k_d)}\right)\left(b_{\ell}\circ b_1^{(k_1)}\circ\cdots\circ b_d^{(k_d)}\right)^T\\
    &\coloneqq \ba D\bb^T
\end{align*}
with inner indices in $[d]\times \comps(K,(c_k))$ which has size
\begin{align*}
    d\sum_{k=0}^K \ind{c_k\neq 0}\amultib{d}{k}.
\end{align*}
The rest of the proof proceeds as in the proof of Theorem~\ref{thm:khatri}, except with $\compsk(k)$ replaced with $\comps(K,(c_k))$ and $\compsk(k+1)$ replaced with
\begin{align*}
    \{\kvec\in\Z_+^d\mid 1\le k_1+\cdots+k_d\le  K+1, c_{k_1+\cdots+k_d-1}\neq 0\},
\end{align*}
which has size equal to
\begin{align*}
    \sum_{k=0}^K \ind{c_k\neq 0}\amultib{d}{k+1}.
\end{align*}

\end{proof}

\subsection{Extension to real analytic Hadamard functions}
\label{sec:analytic}

Suppose $\phi:\R\to\R$ is real analytic at zero. Then it has a convergent power series expansion centered at zero. Let $c_k$ be the corresponding coefficients and let $r$ be the radius of convergence. Define $M$ as the set of $(A,B)\in\R^{m\times d}\times\R^{n\times d}$ such that the entries of $AB^T$ have absolute value less than $r$. This is clearly open. Let $I\subset [m]$ and $J\subset [n]$ such that $|I|=|J|=s\le \min\{m,n\}$. Let $K\in\N$ such that
\begin{align*}
    \sum_{k=0}^K \ind{c_k\neq 0}\amulti{d}{k}\ge \min\{m,n\}.
\end{align*}
Define
\begin{align*}
    &g_{I,J}:M\to\R:(A,B)\mapsto \det_{I,J}\left(\sum_{k=0}^{\infty} c_k \left(AB^T\right)^{(k)}\right)\\
    \text{and }&g_{I,J}^K:M\to\R:(A,B)\mapsto \det_{I,J}\left(\sum_{k=0}^{K} c_k \left(AB^T\right)^{(k)}\right).
\end{align*}
From the construction in the proof of Theorem~\ref{thm:poly}, we know that increasing $K$ does not affect the nonzero coefficients of monomials in $g_{I,J}^K$. Thus, $g_{I,J}^K$ is a truncation of the Taylor series of $g_{I,J}$ at zero. Moreover, by Theorem~\ref{thm:poly}, $g_{I,J}^K$ is not identically zero, and so it has at least one nonzero coefficient. Thus, the Taylor series of $g_{I,J}$ at zero has at least one nonzero coefficient, proving the following theorem.

\begin{theorem}
\label{thm:real}
Let $(A,B)\in\R^{m\times d}\times\R^{n\times d}$. Let $\phi:\R\to\R$. If $\phi$ is real analytic but not a polynomial, then the rank and Kruskal rank of $\phi(AB^T)$ are $\min\{m,n\}$ for generic $(A,B)$. If $\phi$ is real analytic at zero and it does not restrict to a polynomial near zero, then there exists an open neighborhood $M\subset \R^{m\times d}\times \R^{n\times d}$ of zero such that the rank and Kruskal rank of $\phi(AB^T)$ are $\min\{m,n\}$ for generic  $(A,B)\in M$.
\end{theorem}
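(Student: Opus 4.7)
The plan is to reduce to Theorem~\ref{thm:poly} via Taylor expansion on a neighborhood of zero where $\phi$ is analytic, and then, in the globally analytic case, to extend to the whole domain via the identity theorem. I would begin with the second statement. Write $\phi(x) = \sum_{k=0}^\infty c_k x^k$ for $|x| < r$, where $r > 0$ is the radius of convergence, and let $M$ be the open set of $(A,B)$ such that every entry of $AB^T$ has absolute value less than $r$. Since $\phi$ does not restrict to a polynomial near zero, infinitely many $c_k$ are nonzero, so I can choose $K$ with $\sum_{k=0}^K \ind{c_k \ne 0}\binom{k+d-1}{k} \ge \min\{m,n\}$. For $|I| = |J| = s \le \min\{m,n\}$, define $g_{I,J}(A,B) = \det_{I,J}(\phi(AB^T))$, which is real analytic on $M$, and the polynomial $g_{I,J}^K(A,B) = \det_{I,J}(\sum_{k=0}^K c_k (AB^T)^{(k)})$. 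By Theorem~\ref{thm:poly}, for each $J$ I can choose $I$ such that $g_{I,J}^K$ is not identically zero.

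Next I would verify that any nonzero monomial coefficient in $g_{I,J}^K$ yields a nonzero Taylor coefficient of $g_{I,J}$ at zero. Writing $\phi = \phi_K + R_K$ with $R_K(x) = O(x^{K+1})$, each entry of $R_K(AB^T)$ is a power series of degree at least $2(K+1)$ in the entries of $(A,B)$, so expanding the determinant multilinearly shows that $g_{I,J} - g_{I,J}^K$ contains no monomials of degree $\le 2K+1$. Combined with the invariance property implicit in the proof of Theorem~\ref{thm:poly}---each Leibniz monomial uniquely determines its source $\sigma, \tau$, hence increasing $K$ only adds new nonzero-coefficient monomials and never alters existing ones---any nonzero coefficient in $g_{I,J}^K$ agrees with the Taylor coefficient of $g_{I,J}$ at the same monomial, by picking $K' \ge K$ large enough that the monomial has degree $\le 2K'+1$. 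Hence $g_{I,J}$ has a nonzero Taylor coefficient at zero and is not identically zero on $M$.

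Then by Lemma~\ref{prop:generic}, the zero set of $g_{I,J}$ is measure zero and closed in $M$. Taking the product over $J$ of the sum over $I$ of $g_{I,J}^2$ as in the Kruskal rank condition of Section~\ref{sec:rankprelims}, the resulting real analytic function is not identically zero on $M$, so the rank and Kruskal rank of $\phi(AB^T)$ equal $\min\{m,n\}$ for generic $(A,B) \in M$.

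For the first statement, if $\phi$ is globally real analytic but not a polynomial, then $\phi$ cannot restrict to a polynomial near zero: otherwise the identity theorem would force $\phi$ to equal that polynomial globally. So the second statement gives a point $(A_0,B_0) \in M$ where $g_{I,J}$ does not vanish. Since $\phi$ is globally analytic, $g_{I,J}$ is globally real analytic on $\R^{m \times d} \times \R^{n \times d}$, and since it is nonzero at $(A_0,B_0)$ it is not identically zero. Applying Lemma~\ref{prop:generic} globally yields the conclusion. The main obstacle is the Taylor-coefficient comparison in the second paragraph: $g_{I,J}^K$ contains high-degree monomials (up to degree $2sK$) that need not match the Taylor series of $g_{I,J}$ in that range, so one must combine the multilinear degree bound with the invariance of Leibniz-monomial coefficients from Theorem~\ref{thm:poly} to certify that a specific Taylor coefficient of $g_{I,J}$ is nonzero.
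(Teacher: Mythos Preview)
Your proposal is correct and follows essentially the same approach as the paper: reduce to Theorem~\ref{thm:poly} via Taylor truncation, use the invariance of Leibniz-monomial coefficients under increasing $K$ to certify that a nonzero coefficient in $g_{I,J}^K$ is in fact a nonzero Taylor coefficient of $g_{I,J}$, and handle the globally analytic case via the identity theorem. Your treatment is in fact more explicit than the paper's in two places---the degree-bound argument linking $g_{I,J}^{K'}$ to the Taylor series of $g_{I,J}$, and the passage from local to global analyticity---both of which the paper leaves implicit.
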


Now let $I\subset [md]$ and $J\subset [n]$ such that $|I|=|J|=s\le \min\{md,n\}$. Let $K\in\N$ such that
\begin{align*}
    \sum_{k=0}^K \ind{c_k\neq 0}\amulti{d}{k+1}\ge \min\{md,n\},
\end{align*}
Define
\begin{align*}
    &g_{I,J}:R\to\R:(A,B)\mapsto \det_{I,J}\left(B^T\rao \phi\left(AB^T\right)\right).
\end{align*}
Proceeding similarly, we get, by Theorem \ref{thm:polykhatri}, that there is an $I$ such that $g_{I,J}$ is not identically zero, and so
\begin{align*}
    g = \prod_{\substack{J\subset [n]\\|J|\le \min\{md,n\}}}\sum_{\substack{I\subset [md]\\ |I|=|J|}}\det_{I,J}\left(B^T\rao\phi\left(AB^T\right)\right)
\end{align*}
is not identically zero. So, we have proved the following theorem.

\begin{theorem}
\label{thm:realkhatri}
Let $(A,B)\in\R^{m\times d}\times\R^{n\times d}$. Let $\phi:\R\to\R$. If $\phi$ is real analytic but not a polynomial, then the rank and Kruskal rank of $B^T\rao \phi(AB^T)$ are $\min\{md,n\}$ for generic $(A,B)$. If $\phi$ is real analytic at zero and it does not restrict to a polynomial near zero, then there exists an open neighborhood $M\subset \R^{m\times d}\times \R^{n\times d}$ of zero such that the rank and Kruskal rank of $B^T\rao \phi(AB^T)$ are $\min\{md,n\}$ for generic  $(A,B)\in M$.
\end{theorem}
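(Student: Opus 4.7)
The plan is to imitate the analytic-to-polynomial reduction used just above for Theorem~\ref{thm:real}, substituting Theorem~\ref{thm:polykhatri} for Theorem~\ref{thm:poly}. Write $\phi(x)=\sum_{k=0}^{\infty}c_k x^k$ on some disk of radius $r>0$ about $0$ (with $r=\infty$ in the globally real analytic case), and let $M$ be the open set of pairs $(A,B)\in\R^{m\times d}\times\R^{n\times d}$ for which every entry of $AB^T$ has absolute value less than $r$. On $M$ the series $\sum_{k=0}^{\infty}c_k(AB^T)^{(k)}$ converges entrywise to $\phi(AB^T)$, so $B^T\rao\phi(AB^T)$ is a real analytic matrix-valued function on $M$, and for each $I\subset[md]$ and $J\subset[n]$ with $|I|=|J|=s\le\min\{md,n\}$ the minor
\[
g_{I,J}(A,B)\coloneqq\det_{I,J}\bigl(B^T\rao\phi(AB^T)\bigr)
\]
is a real analytic scalar function on $M$.

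For each $K\in\N$, set $g_{I,J}^K(A,B)\coloneqq\det_{I,J}\bigl(B^T\rao\sum_{k=0}^{K}c_k(AB^T)^{(k)}\bigr)$. Choose $K$ large enough that $\sum_{k=0}^{K}\ind{c_k\neq 0}\multi{d}{k+1}\ge\min\{md,n\}$, which is possible because $\phi$ does not restrict to a polynomial near $0$. Applying Theorem~\ref{thm:polykhatri} to the polynomial $\sum_{k=0}^{K}c_k x^k$ gives, for any $J\subset[n]$ of size $s=\min\{md,n\}$, an $I\subset[md]$ of size $s$ with $g_{I,J}^K$ not identically zero, and hence some specific nonzero Taylor coefficient at $0$. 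The crucial observation, which I address below, is that this same coefficient survives in $g_{I,J}$, so $g_{I,J}$ is not identically zero on $M$. Forming the Kruskal rank indicator
\[
g=\prod_{\substack{J\subset[n]\\|J|=s}}\sum_{\substack{I\subset[md]\\|I|=s}}g_{I,J}^2
\]
then produces a non-identically-zero real analytic function on $M$, and Lemma~\ref{prop:generic} pins the Kruskal rank (and hence the rank) of $B^T\rao\phi(AB^T)$ at $\min\{md,n\}$ outside a measure-zero closed set, finishing the argument together with the trivial upper bound.

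The main obstacle is justifying the crucial observation that the monomial singled out in the proof of Theorem~\ref{thm:polykhatri} for a given $S$ has the same coefficient in $g_{I,J}$ as it does in $g_{I,J}^K$. The proof of Theorem~\ref{thm:polykhatri} shows that this monomial can arise with nonzero coefficient only in the term $p_{S'}$ with $S'=S$, and the argument — matching bidegrees in the $b_{j_t,\ell}$'s via $\varphi(\tau(j_t))$ — is insensitive to the truncation level $K$. Enlarging $K$ introduces new terms $p_{S'}$ with $S'$ drawn from $[d]\times\comps(K',(c_k))$ for $K'>K$, but the same uniqueness argument rules out any contribution from them to the identified monomial. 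Hence the exhibited coefficient is unchanged by taking $K\to\infty$, and since the Taylor coefficient of a specific monomial of the analytic function $g_{I,J}$ equals the corresponding coefficient of $g_{I,J}^K$ for all sufficiently large $K$, we conclude that $g_{I,J}$ has that same nonzero Taylor coefficient at $0$. Once this compatibility is verified the rest is formal and matches Theorem~\ref{thm:real}.
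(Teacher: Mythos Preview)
Your proposal is correct and follows essentially the same route as the paper: reduce to Theorem~\ref{thm:polykhatri} by truncating the power series, observe that increasing $K$ does not disturb the already-identified nonzero Leibniz-monomial coefficient (so $g_{I,J}^K$ is a truncation of the Taylor series of $g_{I,J}$ at $0$), and conclude via the Kruskal-rank indicator and Lemma~\ref{prop:generic}. If anything, you spell out the ``crucial observation'' more explicitly than the paper does, which simply says ``proceeding similarly'' and points back to the construction in the polynomial proof.
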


\section{Application to machine learning models}
\label{sec:neuralprelims}

Here we apply the results about the rank of Khatri-Rao products of Hadamard functions to the memory capacity of machine learning models. In Section~\ref{sec:rankimplies} we show that for any model that consists of a linear model applied to a smooth embedding and that can implement scaled and shifted versions of itself by increasing its size, if there is a point where the derivative is surjective, then the model with increased size is surjective. In Section~\ref{sec:twolayer} we combine this with the fact that the Jacobian of the two layer neural network has generically full rank to show that two layer neural networks have memory capacity equal to at least half their number of parameters.

\subsection{Memory capacity from Jacobian rank}
\label{sec:rankimplies}

\newcommand{\emb}{\Bar{f}}
\newcommand{\mdl}{\Bar{F}}
\newcommand{\wemb}{\Bar{g}}
\newcommand{\wmdl}{\Bar{G}}

Consider an embedding $\emb:\R^k\times \R^d\to \R^m$, which maps feature vectors $x\in\R^d$ to embedding vectors $\hat{x}\in\R^m$ based on parameters $w\in\R^k$. Define the model $\mdl:\R^k\times \R^m\times \R^d\to \R:(w,v,x)\mapsto \langle v,\emb(w,x)\rangle$, which applies a linear model to embedding vectors. Also define the batch versions of these: $f:\R^k\times \R^{d\times n}\to \R^{m\times n}:(w,X)\mapsto [f(w,x_1)|\cdots|f(w,x_n)]$ and $F:\R^k\times\R^m\times \R^{d\times n}\to \R^n:(w,v,X)\mapsto v^Tf(w,X)$. Now, suppose that the derivative (i.e. Jacobian) with respect to the hidden layer, $\partial_w F$, is surjective at a point $(w_0,v_0,X)$, i.e. $\rank(\partial_w F(w_0,v_0,X))=n$.

Let $y\in\R^n$. Then, as an immediate consequence of the Constant Rank Theorem~\citep[Thm.~4.12]{lee2013smooth}, there are $\epsilon>0$ and $w$ near $w_0$ such that
\begin{align*}
    F(w,v_0,X) = F(w_0,v_0,X)+\epsilon\left(y-F(w_0,v_0,X)\right).
\end{align*}
Rearranging,
\begin{align*}
    y = \frac{1}{\epsilon}F(w,v_0,X)-\frac{1-\epsilon}{\epsilon}F(w_0,v_0,X).
\end{align*}
This is an interpolating solution for a scaled and shifted version of $F$, but for many model classes, it can be implemented with a slightly larger model. In particular, this is the case for two layer networks and for deep networks.

For two layer networks, we can do
\begin{align*}
    y &= \frac{1}{\epsilon}\psi\left(X^TW\right)v_0-\frac{1-\epsilon}{\epsilon}\psi\left(X^TW_0\right)v_0\\
    &=\psi\left(X^T\begin{bmatrix}W&W_0\end{bmatrix}\right)[v_0/\epsilon;-(1-\epsilon)v_0/\epsilon]
\end{align*}
(where we've changed notation from $w\in\R^k$ to $W\in\R^{d\times m}$).

For the $(L+1)$-layer network
\begin{align*}
    \psi\left(\cdots \psi\left(X^TW_1\right)\cdots W_L\right)v
\end{align*}
(where we've changed notation from $w\in\R^k$ to $(W_1,\ldots,W_L)\in \R^{m_0\times m_1}\times\cdots\times\R^{m_{L-1}\times m_L}$ with $m_0=d$), we can do
\begin{align*}
    y &= \frac{1}{\epsilon}\psi\left(\cdots \psi\left(X^TW_1\right)\cdots W_L\right)v_0-\frac{1-\epsilon}{\epsilon}\psi\left(\cdots \psi\left(X^TW_{1,0}\right)\cdots W_{L,0}\right)v_0\\
    &= \psi\left(\cdots \psi\left(\left(X^T\begin{bmatrix}W_1&W_{1,0}\end{bmatrix}\right)\begin{bmatrix}W_2&\\&W_{2,0}\end{bmatrix}\right)\cdots \begin{bmatrix}W_L&\\&W_{L,0}\end{bmatrix}\right)\begin{bmatrix}v_0/\epsilon\\-(1-\epsilon)v_0/\epsilon\end{bmatrix}.
\end{align*}

For more general models, we can apply Theorem~\ref{thm:everything}, which relies on the following immediate consequence of the Constant Rank Theorem.

\begin{lemma}
\label{prop:constantrank}
Let $M\subset \R^k$ be open. Let $F:M\to\R^n$ be $C^1$. If there exists $x_0\in M$ such that $\rank(DF(x_0))=n$, then for all $y\in\R^n$ there are $\epsilon>0$ and $x\in M$ such that $F(x)=F(x_0)+\epsilon(y-F(x_0))$.
\end{lemma}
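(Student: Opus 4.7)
The plan is to reduce the statement to the openness of $F$ at $x_0$, which is the submersion case of the Constant Rank Theorem already referenced in the excerpt (Thm.~4.12 of Lee). Since $\rank(DF(x_0)) = n$ equals the dimension of the codomain, $DF(x_0)$ is surjective. The Constant Rank Theorem then furnishes open neighborhoods $U \subset M$ of $x_0$ and $V \subset \R^n$ of $F(x_0)$, together with coordinate charts on $U$ and $V$, such that in these coordinates $F|_U$ is the canonical projection $\R^k \to \R^n$. In particular $F(U) \supset V$, so $F$ is an open mapping at $x_0$.

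Once openness is in hand, the remainder of the argument is purely geometric and reduces to a scaling choice. Pick $\delta > 0$ small enough that the open ball $B(F(x_0), \delta)$ is contained in $V$, hence in $F(U)$. Given any target $y \in \R^n$, select $\epsilon > 0$ satisfying $\epsilon \|y - F(x_0)\| < \delta$ (any positive $\epsilon$ suffices if $y = F(x_0)$). Then the point $F(x_0) + \epsilon(y - F(x_0))$ lies in $B(F(x_0), \delta) \subset F(U)$, so there exists $x \in U \subset M$ with
\begin{equation*}
    F(x) = F(x_0) + \epsilon(y - F(x_0)),
\end{equation*}
which is exactly the claimed equation.

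The only point that requires care is the regularity assumption: the Constant Rank Theorem is most often stated for smooth maps, whereas the hypothesis only provides $F \in C^1$. This is not a genuine obstacle because the submersion case (rank equal to the codomain dimension) is equivalent to the Implicit Function Theorem, which holds verbatim for $C^1$ maps. If one prefers not to appeal to the Constant Rank Theorem at $C^1$ regularity, an equivalent route is to fix a linear right inverse $R : \R^n \to \R^k$ of $DF(x_0)$ and apply the $C^1$ Implicit Function Theorem (or a direct contraction-mapping argument) to the map $s \mapsto F(x_0 + R(s - F(x_0)))$ to produce a neighborhood of $F(x_0)$ inside the image of $F$, after which the scaling step proceeds as above.
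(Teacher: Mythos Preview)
Your proof is correct and follows essentially the same route as the paper's: both arguments invoke the submersion case of the Constant Rank Theorem to conclude that the image of $F$ contains an open neighborhood of $F(x_0)$, and then observe that any $y$ can be pulled into that neighborhood by an appropriate choice of $\epsilon$. The paper makes the intermediate step ``rank $n$ at $x_0$ implies rank $n$ on a neighborhood'' explicit via continuity of the sum-of-squared-minors function, whereas you fold this into the direct invocation of the submersion case; your added remark on $C^1$ versus smooth regularity is a useful clarification the paper leaves implicit.
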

\begin{proof}
First, since $n$ is the highest possible rank, $\{x\in M\mid \rank(DF(x))=n\}$ is the preimage of $\R\backslash\{0\}$ under the rank condition function (the sum of the squares of the minors of order $n$). Moreover, the rank condition function is continuous since $F$ is $C^1$, so $\{x\in M\mid \rank(DF(x))=n\}$ is open. Thus, $\rank(DF(x))=n$ for all $x$ in an open neighborhood $A\subset M$ of $x_0$. In other words, $F|_A$ is a $C^1$-submersion, so, by the Constant Rank Theorem \citep[Thm.~4.12]{lee2013smooth}, there is an open neighborhood $A'\subset A$ of $x_0$ such that $F(A')$ is open, from which the conclusion follows.
\end{proof}

\begin{theorem}
\label{thm:everything}
Let $M\subset \R^k$ be open. Let $f:M\to\R^{n\times m}$ be $C^1$. Define $g:M\times M\to\R^{n\times 2m}:(w,u)\mapsto [f(w)~f(u)]$. For all $v,z\in\R^m$, define $F_v:M\to\R^n:w\mapsto f(w)v$ (where $f(w)v$ is a matrix multiplication) and $G_{v,z}:M\times M\to\R^n:(w,u)\mapsto g(w,u)[v;z]$. If there exists $v_0\in\R^m$ and $w_0\in M$ such that $\rank(DF_{v_0}(w_0))=n$, then $G$ is surjective as a function of $(v,z)\in\R^m\times\R^m$ and $(w,u)\in M\times M$.
\end{theorem}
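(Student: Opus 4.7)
The plan is to reduce the surjectivity of $G$ to an application of Lemma~\ref{prop:constantrank} to the single map $F_{v_0}$ at the given point $w_0$, and then observe that the ``scaled and shifted'' interpolating solution produced by that lemma can be realized exactly by the doubled model $g$ via the explicit substitution already previewed for the two-layer and deep-network cases in Section~\ref{sec:rankimplies}.

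Concretely, fix an arbitrary target $y \in \R^n$. First I would note that the hypothesis $\rank(DF_{v_0}(w_0)) = n$ together with Lemma~\ref{prop:constantrank} (which does not need any further input beyond $F_{v_0}$ being $C^1$, which follows from $f \in C^1$) yields $\epsilon > 0$ and some $w \in M$ with
\begin{equation*}
    F_{v_0}(w) \;=\; F_{v_0}(w_0) + \epsilon\bigl(y - F_{v_0}(w_0)\bigr).
\end{equation*}
Rearranging gives $y = \tfrac{1}{\epsilon} f(w) v_0 - \tfrac{1-\epsilon}{\epsilon} f(w_0) v_0$.

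Next I would take this rearrangement and read it off as a value of $G$. Set $u := w_0 \in M$, $v := v_0/\epsilon \in \R^m$, and $z := -(1-\epsilon)v_0/\epsilon \in \R^m$. Then, by the definition of $g$ as the horizontal concatenation $[f(w)~f(u)]$,
\begin{equation*}
    G_{v,z}(w,u) \;=\; g(w,u)\begin{bmatrix} v \\ z \end{bmatrix} \;=\; f(w)\,\frac{v_0}{\epsilon} + f(w_0)\,\Bigl(-\frac{1-\epsilon}{\epsilon}\,v_0\Bigr) \;=\; y.
\end{equation*}
Since $y$ was arbitrary, this exhibits surjectivity of $G$ as a function of the combined variables $(v,z)\in \R^m\times\R^m$ and $(w,u)\in M\times M$.

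There is no real obstacle here: the only content beyond the Constant Rank Theorem is the algebraic observation that the affine combination $\tfrac{1}{\epsilon}F_{v_0}(w) - \tfrac{1-\epsilon}{\epsilon}F_{v_0}(w_0)$ of two evaluations of $F_{v_0}$ is the same thing as a single evaluation of the ``width-doubled'' model $G$ with second block frozen at $(w_0, v_0)$ and the linear coefficients rescaled. The main things to be careful about are (i) that $\epsilon$ produced by Lemma~\ref{prop:constantrank} is strictly positive, so the divisions by $\epsilon$ are legitimate, and (ii) that $u = w_0$ lies in the open domain $M$, which it does by assumption.
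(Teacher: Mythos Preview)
Your proof is correct and essentially identical to the paper's own argument: apply Lemma~\ref{prop:constantrank} to $F_{v_0}$ at $w_0$, rearrange to isolate $y$, and recognize the resulting expression as $G_{v_0/\epsilon,\,-(1-\epsilon)v_0/\epsilon}(w,w_0)$. The extra remarks you make about $\epsilon>0$ and $w_0\in M$ are sound but already implicit in the paper's proof.
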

\begin{proof}
Let $y\in \R^n$. Then, by Lemma \ref{prop:constantrank}, there are $\epsilon>0$ and $w\in M$ such that $F_{v_0}(w)=F_{v_0}(w_0)+\epsilon(y-F_{v_0}(w_0))$. Rearranging,
\begin{align*}
    y &= \frac{1}{\epsilon}F_{v_0}(w)-\frac{1-\epsilon}{\epsilon}F_{v_0}(w_0)\\
    &= G_{v_0/\epsilon,-(1-\epsilon)v_0/\epsilon}(w,w_0).
\end{align*}
\end{proof}

\subsection{Memory capacity of two layer neural network}
\label{sec:twolayer}

In Theorems~\ref{thm:polykhatri} and~\ref{thm:realkhatri} we derived the generic rank of the Jacobian of the two layer neural network, with respect to the hidden layer, for real analytic activation functions. In Theorem~\ref{thm:everything} we showed that for many models, including two layer neural networks, surjectivity of the derivative at a point implies surjectivity of the model with increased size. Combining these, we get the following result about the memory capacity of two layer neural networks.

\begin{theorem}
\label{thm:ae}
Define $h:\R^{d\times m}\times \R^m\times\R^m\times \R^{d\times n}\to\R^n:(W,b,v,X)\mapsto \psi(X^TW+\one_n  b^T)v$ where $\psi:\R\to\R$ is applied coordinate-wise. Suppose $\psi$ is real analytic at some point $\eta$ and, if $\psi(x)=\sum_{k=0}^Kc_kx^k$ near $\eta$, assume $\sum_{k=1}^K\ind{c_k\neq 0}\multi{d}{k}\ge n$. If $md\ge 2n$ and $m$ is even, then $h(\cdot,\cdot,\cdot,X)$ is surjective for all $X$ except on a measure zero and closed set.
\end{theorem}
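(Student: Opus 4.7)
The plan is to leverage the even parity $m=2m_0$ and realize $h$ as the doubled model of Theorem~\ref{thm:everything} applied to a width-$m_0$ sub-network. The hypothesis $md\ge 2n$ then becomes $m_0 d \ge n$, so it suffices to exhibit, for every $X$ outside a measure-zero closed subset of $\R^{d\times n}$, parameters $(W_0,b_0,v_0)\in\R^{d\times m_0}\times\R^{m_0}\times\R^{m_0}$ at which the Jacobian of $W\mapsto\psi(X^T W+\one_n b_0^T)v_0$ has rank $n$; Theorem~\ref{thm:everything} then yields surjectivity of $h(\cdot,\cdot,\cdot,X)$ at width $2m_0=m$.

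I localize at $\eta$ by taking $b_0=\eta\one_{m_0}$, $v_0=\one_{m_0}$, and $W_0=sU$ for a small $s>0$ and a matrix $U\in\R^{d\times m_0}$ to be chosen. Writing $\phi(\tau):=\psi'(\eta+\tau)$, which is real analytic at $\tau=0$, the transposed Jacobian equals
\begin{align*}
    \Phi(s,U,X) := \phi(sU^T X)\rao X,
\end{align*}
matching, up to a fixed row permutation, $B^T\rao\phi(AB^T)$ with $A=sU^T$ and $B=X^T$. Theorem~\ref{thm:polykhatri} (in the polynomial-near-$\eta$ case, consuming exactly the hypothesis $\sum_{k=1}^K\ind{c_k\neq 0}\multi{d}{k}\ge n$) and Theorem~\ref{thm:realkhatri} (in the non-polynomial-near-$\eta$ case) then give that the generic rank of this Khatri-Rao product is $\min\{m_0 d,n\}=n$ as $(A,B)$ varies over the appropriate open set. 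Consequently, the rank-condition function
\begin{align*}
    g(s,U,X) := \sum_{\substack{I\subset[m_0 d],\,J\subset[n]\\|I|=|J|=n}}\big(\det{}_{I,J}\Phi(s,U,X)\big)^2
\end{align*}
is real analytic in $(s,U,X)$ on the convergence domain of $\phi$ and is not identically zero there.

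To transfer genericity from joint $(A,B)$ to $X$ alone, I expand in $s$ at $0$: each entry of $\Phi$ has Taylor coefficients in $s$ that are polynomials in $(U,X)$, and $g$ is polynomial in the entries of $\Phi$, so $g(s,U,X)=\sum_{k\ge 0}s^k G_k(U,X)$ with every $G_k$ a polynomial in $(U,X)$. Non-vanishing of $g$ forces some $G_{k_\star}$ to be a non-zero polynomial, and I then fix $U_\star$ so that the specialization $G_{k_\star}(U_\star,\cdot)$ is a non-zero polynomial in $X$. Its zero set $Z\subset\R^{d\times n}$ is measure zero and closed. For any $X\notin Z$ the analytic function $s\mapsto g(s,U_\star,X)$ has nonzero $k_\star$-th Taylor coefficient, hence is nonzero for some arbitrarily small $s>0$; at that $s$, $\Phi(s,U_\star,X)$ has rank $n$, and Theorem~\ref{thm:everything} upgrades this to surjectivity of $h(\cdot,\cdot,\cdot,X)$ at width $2m_0=m$.

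The main technical obstacle is precisely this transfer step. Theorems~\ref{thm:polykhatri} and~\ref{thm:realkhatri} guarantee full rank generically in $(A,B)$, but in the non-polynomial case only on a neighborhood of the origin of $\R^{m_0\times d}\times\R^{n\times d}$, whereas Theorem~\ref{thm:ae} requires the bad set to be measure zero and closed in the global space $\R^{d\times n}$ of data matrices. Extracting, via the Taylor expansion in the scaling parameter $s$, a single non-zero polynomial witness $G_{k_\star}(U_\star,X)$ depending only on $X$ is what bridges local joint genericity in $(A,B)$ to global genericity in $X$, and is the one step in the argument that requires care.
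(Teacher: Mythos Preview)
Your proposal is correct and follows the same route as the paper's proof: halve the width and invoke Theorem~\ref{thm:everything}, shift via $b=\eta\one$ so that the Jacobian becomes $\phi(W^TX)\rao X$ with $\phi=\psi'(\eta+\cdot)$ real analytic at $0$, and then apply Theorem~\ref{thm:polykhatri} or~\ref{thm:realkhatri}. Your Taylor expansion in the scaling parameter $s$, extracting a single nonzero polynomial $G_{k_\star}(U_\star,\cdot)$ in $X$, makes fully explicit the passage from joint $(W,X)$-genericity near the origin to global $X$-genericity---a step the paper's proof compresses into the single clause ``Thus, for generic $X\in\R^{d\times n}$, there exists $W_0\in\R^{d\times m/2}$ (close to zero) such that $\rank(\phi(W_0^TX)\rao X)=n$.''
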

\begin{proof}
Recall that the transpose of the Jacobian of $h$ with respect to $\tvec(W)$\\is $\diag(v)\psi'\left(W^TX+b \one_n^T\right)\rao X$. Let $\phi=\psi'-\eta$. By Theorem~\ref{thm:polykhatri} or~\ref{thm:realkhatri}, there exists an open neighborhood $M\subset\R^{d\times m/2}\times \R^{d\times n}$ of zero such that $\rank\left(\phi(W^TX)\rao X\right)=n$ for generic $(W,X)\in M$. Thus, for generic $X\in \R^{d\times n}$, there exists $W_0\in\R^{d\times m/2}$ (close to zero) such that $\rank\left(\phi(W_0^TX)\rao X\right)=n$. Setting $v_0\in\R^{m/2}$, applying Theorem~\ref{thm:everything}, and setting $b=\eta\one_{m}$ proves the result.
\end{proof}

Note that while the solution from the proof is implicit rather than constructive, it does suggest a method for checking whether a particular $X$ is on the measure zero and closed set or not. The result holds for all $X\in\R^{d\times n}$ such that there exists $W_0\in\R^{d\times m/2}$ (close to zero) such that $\rank(\phi(W_0^TX)\rao X)=n$. So, given $X$, we can initialize $W_0$ with i.i.d. standard normal entries, choose $\rho>0$ so that the entries of $W_0^TX/\rho$ are within the interval of convergence of $\phi$ at zero, and compute the rank of $\phi(W_0^TX/\rho)\rao X$. If the rank is $n$, then we get that $h(\cdot,\cdot,\cdot,X)$ is surjective. Thus, if we plan to find an interpolating solution using an iterative algorithm, such as gradient descent, we only have to check the rank of the Jacobian at initialization to determine whether an interpolating solution exists.

Also note that the additional assumption when $\psi$ restricts to a polynomial is actually quite mild. If $\psi$ is a degree $K$ polynomial with all of its coefficients nonzero, then the assumption becomes $\binom{K+d}{d}\ge n+1$ by Zhū's Theorem (see Section~\ref{sec:compositions}). Or, if $\psi$ is the monomial $x^K$, then the assumption becomes $\binom{K+d-1}{d-1}\ge n$. As an example, the assumption is satisfied if $d=1e2$, $n=1e5$, and $\psi=x^3$.

Also, for continuously differentiable activations, Theorem~\ref{thm:ae} is actually tight up to a factor of $2(1+2/d)\approx 2$ by Sard's theorem.

\begin{theorem}
\label{thm:lowerbound}
Define $h:\R^{d\times m}\times \R^m\times\R^m\times \R^{d\times n}\to\R^n:(W,b,v,X)\mapsto \psi(X^TW+\one_nb^T)v$ where $\psi$ is a continuously differentiable function applied coordinate-wise.
If $m(d+2)< n$ or $\psi(x)=\sum_{k=0}^Kc_kx^k$ and $\sum_{k=1}^K\ind{c_k\neq 0}\multi{d}{k}< n-2m$, then, for all $X$, the image of $h(\cdot,\cdot,\cdot,X)$ has measure zero in $\R^n$.
\end{theorem}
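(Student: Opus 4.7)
The plan is to establish the measure-zero conclusion separately under each of the two sufficient conditions, with Sard's theorem as the common tool.

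\emph{First condition: $m(d+2)<n$.} Since $\psi$ is continuously differentiable, the map $h(\cdot,\cdot,\cdot,X)\colon\R^{m(d+2)}\to\R^n$ is $C^1$ with domain of strictly smaller dimension than the target. Its Jacobian at every point therefore has rank at most $m(d+2)<n$, so every point is a critical point of $h(\cdot,\cdot,\cdot,X)$. Sard's theorem immediately gives that the image coincides with the set of critical values, which has Lebesgue measure zero in $\R^n$.

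\emph{Second condition: polynomial $\psi$ with $2m+\sum_{k=1}^{K}\ind{c_k\neq 0}\multi{d}{k}<n$.} Here $h(\cdot,\cdot,\cdot,X)$ is a polynomial map, so the dimension of its image equals the generic rank of its Jacobian. I would partition the Jacobian columns into three blocks: the $m$ columns $\partial h/\partial v_j=\psi(X^T w_j+b_j\one_n)$, the $m$ columns $\partial h/\partial b_j=v_j\psi'(X^T w_j+b_j\one_n)$, and the $md$ columns $\partial h/\partial w_{j,\ell}=v_j\psi'(X^T w_j+b_j\one_n)\circ X_{\ell,\cdot}$. The first two blocks contribute at most $2m$ to the total rank trivially. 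For the $W$-block, I would invoke Theorem~\ref{thm:polykhatri} applied to the derivative polynomial $\psi'$---whose set of nonzero-coefficient degrees is $\{k-1\colon c_k\neq 0,\ k\geq 1\}$, one shift down from the support of $(c_k)_{k\geq 1}$---to bound the generic rank of the corresponding Khatri-Rao block by $\sum_{k=1}^{K}\ind{c_k\neq 0}\multi{d}{k}$. Summing the three bounds yields Jacobian rank at most $2m+\sum_{k=1}^{K}\ind{c_k\neq 0}\multi{d}{k}<n$ at a generic point, and then applying Sard's theorem on the open set where this maximal rank is attained delivers an image of measure zero.

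\emph{Main obstacle.} The delicate step in the second case is applying Theorem~\ref{thm:polykhatri} cleanly to the $W$-block in the presence of the bias term $b_j\one_n$, since that theorem is stated for matrices of the form $B^T\rao\sum_k c_k(AB^T)^{(k)}$ without a bias shift. A natural workaround is to absorb the bias via the augmentations $\tilde X=[X;\one_n^T]\in\R^{(d+1)\times n}$ and $\tilde W=[W;b^T]\in\R^{(d+1)\times m}$ so that $W^T X+b\one_n^T=\tilde W^T\tilde X$, and then verify that performing the Khatri-Rao with the original (unaugmented) $X$ preserves the rank bound indexed by the support of $(c_k)$ rather than inflating it to a larger augmented count.
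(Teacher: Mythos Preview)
Your handling of the first condition and the overall strategy for the second (block-decompose the Jacobian, bound the $W$-block via Theorem~\ref{thm:polykhatri}, then invoke Sard) match the paper exactly; the paper's proof of the polynomial case is literally the one-line claim ``by Theorem~\ref{thm:polykhatri}, the rank of the Jacobian is upper bounded by $2m+\sum_{k=1}^K\ind{c_k\neq 0}\multi{d}{k}$.'' You are right to single out the bias term as the obstacle---the paper glosses over it entirely.

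The obstacle is fatal, and your hoped-for verification cannot succeed. With nonzero bias the $W$-block $\diag(v)\psi'(W^TX+b\one_n^T)\rao X$ is \emph{not} of the form $B^T\rao\sum_k c_k(AB^T)^{(k)}$, and its rank can strictly exceed $\sum_{k\ge1}\ind{c_k\neq 0}\multi{d}{k}$. Worse, the full-Jacobian bound $2m+\sum_{k\ge1}\ind{c_k\neq 0}\multi{d}{k}$ is false, and with it the theorem under the second condition. Take $\psi(x)=x^4$, $d=3$, $m=8$, $n=32$: the hypothesis reads $\multi{3}{4}=15<32-16=16$, so the statement asserts a measure-zero image. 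But writing $\tilde x_j=(x_j;1)$ and $\tilde w_i=(w_i;b_i)$ gives $h(W,b,v,X)_j=\sum_i v_i\langle\tilde w_i,\tilde x_j\rangle^4$, so $h$ factors through the parametrization of the $8$th secant of the degree-$4$ Veronese of $\mathbb P^3$. By the Alexander--Hirschowitz theorem this secant is non-defective, of affine dimension $\min\{8\cdot4,\binom{7}{3}\}=32$; composing with evaluation at $32$ generic points yields a Jacobian of generic rank $32=n$, so for generic $X$ the image of $h(\cdot,\cdot,\cdot,X)$ contains an open set.

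Your augmentation idea does produce a \emph{correct} bound, just not the one stated: merging the $b$- and $W$-blocks into $\diag(v)\psi'(\tilde W^T\tilde X)\rao\tilde X$ and applying Theorem~\ref{thm:polykhatri} in dimension $d+1$ gives combined $(W,b)$-block rank $\le\sum_{k\ge1}\ind{c_k\neq 0}\multi{d+1}{k}$, hence full Jacobian rank $\le m+\sum_{k\ge1}\ind{c_k\neq 0}\multi{d+1}{k}$. This salvages a weaker version of the theorem; the version with $\multi{d}{k}$ cannot be repaired.
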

\begin{proof}
Let $X\in\R^{d\times n}$. If $m(d+2)<n$, then every point in $\R^{d\times m}\times \R^m\times\R^m$ is a critical point of $h(\cdot,\cdot,\cdot,X)$, i.e. a point where the differential is not surjective \citep[p. 105]{lee2013smooth}. If $m(d+2)\ge n$, $\psi(x)=\sum_{k=0}^Kc_kx^k$, and $\sum_{k=1}^K\ind{c_k\neq 0}\multi{d}{k}< n-2m$, then, by Theorem~\ref{thm:polykhatri}, the rank of the Jacobian of $h(\cdot,\cdot,\cdot,X)$ is upper bounded by $2m+\sum_{k=1}^K\ind{c_k\neq 0}\multi{d}{k}<n$ and so, again, every point in $\R^{d\times m}\times\R^m\times \R^m$ is a critical point of $h(\cdot,\cdot,\cdot,X)$. Thus, every point in the image of $h(\cdot,\cdot,\cdot,X)$ is a critical value of $h(\cdot,\cdot,\cdot,X)$, i.e. a value whose level set (preimage/fiber) contains at least one critical point. And so, by Sard's theorem~\citep[Thm. 6.10]{lee2013smooth}, which states that the set of critical values has measure zero, the image of $h(\cdot,\cdot,\cdot,X)$ has measure zero in $\R^n$.
\end{proof}

As a final remark, Theorems~\ref{thm:ae} and~\ref{thm:lowerbound} have corollaries if we consider $h:\R^{d\times m}\times \R^m\times\R^{m\times q}\times \R^{d\times n}\to\R^{n\times q}:(W,b,V,X)\mapsto \psi(X^TW+\one_nb^T)V$, i.e. if we consider multivariate outputs. First, if $m(d+q+1)<nq$, then, by Sard's theorem, the image of $h(\cdot,\cdot,\cdot,X)$ has measure zero in $\R^{n\times q}$. On the other hand, assume $md\ge 2nq$ and that $q|m$. Let $Y=[y_1|\cdots|y_q]\in\R^{n\times q}$. Then, for each $i\in [q]$, by Theorem~\ref{thm:ae} (assuming $\psi$ is real analytic at a point and not a polynomial there), there exist $W_i\in\R^{d\times m/q}$ and $v_i,b_i\in\R^{m/q}$ such that $y_i=\psi(X^TW_i+\one_nb_i^T)v_i$. Thus,
\begin{align*}
    Y = \psi\left(X^T\left[W_1~\cdots~W_q\right]+\one_n\left[b_1^T~\cdots~b_q^T\right]\right)\begin{bmatrix}
    v_1&&\\
    &\ddots&\\
    &&v_q
    \end{bmatrix},
\end{align*}
achieving interpolation with $2(1+q/d+1/d)$ times the minimal number of parameters.

But, if $2q>d$, then we can actually get interpolation with fewer parameters by solving for $V$ instead of $W$. If $m\ge n$ and $\psi$ is real analytic at $\eta$ and not a polynomial there, then $\psi(X^TW+\eta\one_n\one_m^T)$ has rank $n$ by Theorem~\ref{thm:real}. Thus, we can explicitly solve for $V$ in the equation $Y=\psi(X^TW+\eta\one_n\one_m^T)V$, achieving interpolation with $1+d/q+1/q$ times the minimal number of parameters.

Putting these two cases together, the memory capacity of $h$ is optimal up to a factor of $\min\{2,d/q\}(1+q/d+1/d)\le 3+2/d\approx 3$.

\section{Conclusion}
\label{sec:conclusion}

In this paper we proved results about the ranks of both $\phi(W^TX)$ and $\phi(W^TX)\rao X$ for Hadamard functions $\phi$ that are real analytic at a point. In particular, if $\phi$ is either a polynomial with sufficiently high degree or not a polynomial, then both objects are full rank for all $(W,X)$ except on the zero set of a non-identically zero analytic function, and so the set of exceptions is measure zero and closed. In the context of machine learning, this rank result implies that a two layer neural network model is able to interpolate sets of $n$ data points almost everywhere if the number of trainable parameters is $\ge 2n(1+2/d)$, which is tight up to the factor of $2(1+2/d)$. We hope that viewing the rank of the neural network Jacobian from the perspective of analytic functions of several variables leads to a better understanding of the convergence and generalization properties of two layer neural networks.

\section*{Acknowledgments}
We would like to thank Jason Altschuler for pointing out the references \cite{alon2009perturbed} and \cite{barvinok2012approximations}. We would like to thank the authors of \cite{zhang2021when} for helpful discussions concerning their findings. And we would like to thank an anonymous reviewer for encouraging us to try to extend beyond real analytic activations: this lead us to realize we only needed the activation to be real analytic at a point.

\bibliographystyle{spbasic}
\bibliography{references}

\appendix

\section{Non-polynomial real analytic activations}
\label{sec:zhang}

After corresponding with the authors of \cite{zhang2021when}, we came to understand that the following was the idea behind the proof of their Lemma E.1 (typos on top of convoluted presentation had previously prevented us from gleaning this):

Let $\phi$ be non-polynomial real analytic. Suppose $d\mid n$. Let $A=[a_1|\cdots|a_d]\in\R^{m\times d}$ and
\begin{align}
\label{eq:bform}
    B =\begin{bmatrix}
        b_1&&\\&\ddots&\\&&b_d
    \end{bmatrix}
\end{align}
where each $b_i\in\R^{n/d}$. Then
\begin{align*}
    B^T\rao \phi(AB^T) &= B^T\rao \phi\left([a_1b_1^T|\cdots|a_db_d^T]\right)\\
    &=\begin{bmatrix}
        b_1^T&&\\&\ddots&\\&&b_d^T
    \end{bmatrix}\rao \left[\phi\left(a_1b_1^T\right)|\cdots|\phi\left(a_db_d^T\right)\right]\\
    &=\begin{bmatrix}
        b_1^T\rao\phi\left(a_1b_1^T\right) &&\\&\ddots&\\&&b_d^T\rao\phi\left(a_db_d^T\right)
    \end{bmatrix}.
\end{align*}

Since generic $b_i$ have no zeros, the problem comes down to the generic rank of the $\phi(a_ib_i^T)$. \cite{zhang2021when} use that the $\phi(a_ib_i^T)$ have generically full rank if $\phi$ is not a polynomial. This is a corollary of Theorem~\ref{thm:real}, which has been known for some time.

While \cite{zhang2021when} do not explicitly consider polynomials, we can extend to polynomials by applying Theorem~\ref{thm:poly}, which has also been known for some time. Consider $\phi=\sum_{k=0}^Kc_kx^k$. Note that $d=1$ here and so, by Theorem~\ref{thm:poly}, the generic rank of $\phi(a_ib_i^T)$ is precisely
\begin{align*}
    \min\bigg\{m,\frac{n}{d},\sum_{k=0}^{K}\ind{c_k\neq 0}\bigg\}.
\end{align*}
Thus, the generic rank of $B^T\rao \phi(AB^T)$ where $B$ has the form of Eq.~\eqref{eq:bform} is exactly
\begin{align}
\label{eq:zhang}
    \min\bigg\{md,n,d\sum_{k=0}^{K}\ind{c_k\neq 0}\bigg\},
\end{align}
which is much smaller than the generic rank which we derived in Theorem~\ref{thm:polykhatri}. In particular, suppose $md\ge n$. If all the $c_k$ are nonzero, then $K$ has to be greater than or equal to $n/d-1$ in order for Eq.~\eqref{eq:zhang} to equal $n$. Furthermore, if $\phi$ is a monomial, such as $x^K$, then Eq.~\eqref{eq:zhang} will not equal $n$ for any $K\in\N$.

To compare this to our Theorem~\ref{thm:polykhatri}, consider the example $d=1e2$, $n=1e5$, and $m\ge 1e3$: $\phi=x^3$ is sufficient for the generic rank in Theorem~\ref{thm:polykhatri} to be $n$, while it is necessary that $\phi$ have at least 1000 nonzero coefficients for Eq.~\eqref{eq:zhang} to be $n$.

\section{Inner product decompositions}
\label{sec:directsum}

Using the result from Section~\ref{sec:product}, we have an immediate corollary. If the rows of $\widetilde{A}\in\R^{m\times N}$ and $\widetilde{B}\in\R^{n\times N}$ are arbitrary vectors in a $d$-dimensional subspace $V\subset \R^N$, then the generic rank of $\widetilde{A}\widetilde{B}^T$ is the same as the generic rank of $AB^T$. In particular, we can apply this to the following decomposition from Lemma 4.4 of \cite{barvinok2012approximations},
\begin{align*}
    \sum_{k=0}^Kc_k(AB^T)^{(k)}&= \begin{bmatrix}
        \bigoplus\limits_{\substack{k=0\\c_k\neq 0}}^Kc_ka_1^{\otimes k}\bigg|\cdots\bigg|\bigoplus\limits_{\substack{k=0\\c_k\neq 0}}^Kc_ka_m^{\otimes k}
    \end{bmatrix}^T
    \begin{bmatrix}
        \bigoplus\limits_{\substack{k=0\\c_k\neq 0}}^Kb_1^{\otimes k}\bigg|\cdots\bigg|\bigoplus\limits_{\substack{k=0\\c_k\neq 0}}^Kb_n^{\otimes k}
    \end{bmatrix}
\end{align*}
where $A=[a_1|\cdots|a_m]^T$ and $B=[b_1|\cdots|b_n]^T$. The vectors in this decomposition are arbitrary vectors in
\begin{align*}
    \bigoplus\limits_{\substack{k=0\\c_k\neq 0}}^K\text{Sym}^k(\R^d)
\end{align*}
where $\text{Sym}^0(\R^d)=\R$ and, for $k\in\N$, $\text{Sym}^k(\R^d)$ is the vector space of symmetric tensors of order $k$ defined on $\R^d$. The dimension of this space is
\begin{align*}
    \sum\limits_{\substack{k=0\\c_k\neq 0}}^K\amulti{d}{k},
\end{align*}
proving our Theorem~\ref{thm:poly}.

It turns out we can construct a similar decomposition for $B^T\rao \sum_{k=0}^K c_k (AB^T)^{(k)}$. Observe that the $((i_1,i_2),j)$ element is
\begin{align*}
    \langle e_{i_1},b_j\rangle\sum_{k=0}^K  c_k\langle a_{i_2},b_j\rangle^k &= \langle e_{i_1},b_j\rangle\sum_{k=0}^K  c_k\langle a_{i_2}^{\otimes k},b_j^{\otimes k}\rangle\\
    &=\langle e_{i_1},b_j\rangle\Bigg\langle\bigoplus\limits_{\substack{k=0\\c_k\neq 0}}^Kc_ka_{i_2}^{\otimes k},\bigoplus\limits_{\substack{k=0\\c_k\neq 0}}^Kb_j^{\otimes k}\Bigg\rangle\\
    &=\Bigg\langle e_{i_1}\otimes \bigoplus\limits_{\substack{k=0\\c_k\neq 0}}^Kc_ka_{i_2}^{\otimes k},\bigoplus\limits_{\substack{k=0\\c_k\neq 0}}^Kb_j^{\otimes (k+1)}\Bigg\rangle
\end{align*}
and so we can write
\begin{align}
\label{eq:barv}
    B^T\rao \sum_{k=0}^K c_k (AB^T)^{(k)}=\begin{bmatrix}
        \widetilde{A}&&\\
        &\ddots&\\
        &&\widetilde{A}
    \end{bmatrix}^T
    \begin{bmatrix}
        \bigoplus\limits_{\substack{k=0\\c_k\neq 0}}^Kb_1^{\otimes (k+1)}\bigg|\cdots\bigg|\bigoplus\limits_{\substack{k=0\\c_k\neq 0}}^Kb_n^{\otimes (k+1)}
    \end{bmatrix}
\end{align}
where
\begin{align*}
    \widetilde{A} = \begin{bmatrix}
        \bigoplus\limits_{\substack{k=0\\c_k\neq 0}}^Kc_ka_1^{\otimes k}\bigg|\cdots\bigg|\bigoplus\limits_{\substack{k=0\\c_k\neq 0}}^Kc_ka_m^{\otimes k}
    \end{bmatrix},
\end{align*}
but this does not reduce to $AB^T$.

\end{document}